\documentclass{article} 
\pdfoutput=1
\usepackage[utf8]{inputenc}
\usepackage[english]{babel}
\usepackage{main,times}


\usepackage{amsmath,amsfonts,bm}









\def\eqref#1{equation~\ref{#1}}









\def\1{\bm{1}}










\DeclareMathAlphabet{\mathsfit}{\encodingdefault}{\sfdefault}{m}{sl}
\SetMathAlphabet{\mathsfit}{bold}{\encodingdefault}{\sfdefault}{bx}{n}











\newcommand{\E}{\mathbb{E}}

\newcommand{\sigmoid}{\sigma}



\usepackage{hyperref}
\usepackage{url}
\usepackage{multirow}
\usepackage[most]{tcolorbox}
\usepackage{xcolor}
\usepackage{setspace}
\usepackage{subcaption}
\usepackage{amsthm}
\usepackage{enumitem}
\usepackage{tcolorbox}
\usepackage{float}
\usepackage{relsize}
\usepackage{booktabs}
\usepackage{placeins}
\usepackage{CJKutf8}

\newtheorem{proposition}{Proposition}
\newtheorem{corollary}{Corollary}
\numberwithin{proposition}{section}
\numberwithin{corollary}{section}

\title{Diverse Preference Learning for Capabilities and Alignment}


\author{Stewart Slocum, Asher Parker-Sartori, and Dylan Hadfield-Menell \\
MIT CSAIL \\
\texttt{\{stew, asher, dhm\}@csail.mit.edu}
}

%

\iclrfinalcopy 
\begin{document}

\maketitle

\begin{abstract}
The ability of LLMs to represent diverse perspectives is critical as they increasingly impact society. However, recent studies reveal that alignment algorithms such as RLHF and DPO significantly reduce the diversity of LLM outputs. Not only do aligned LLMs generate text with repetitive structure and word choice, they also approach problems in more uniform ways, and their responses reflect a narrower range of societal perspectives. We attribute this problem to the KL divergence regularizer employed in preference learning algorithms. This causes the model to systematically overweight majority opinions and sacrifice diversity in its outputs. To address this, we propose Soft Preference Learning, which decouples the entropy and cross-entropy terms in the KL penalty —
allowing for fine-grained control over LLM generation diversity. From a capabilities perspective, LLMs trained using Soft Preference Learning attain higher accuracy on difficult repeated sampling tasks and produce outputs with greater semantic and lexical diversity. From an alignment perspective, they are capable of representing a wider range of societal viewpoints and display improved logit calibration. Notably, Soft Preference Learning resembles, but is a Pareto improvement over, standard temperature scaling.
\end{abstract}

\section{Introduction}

\begin{figure}[t]
    \centering
    \begin{subfigure}[t]{0.67\linewidth}
        \centering
        \includegraphics[width=\linewidth]{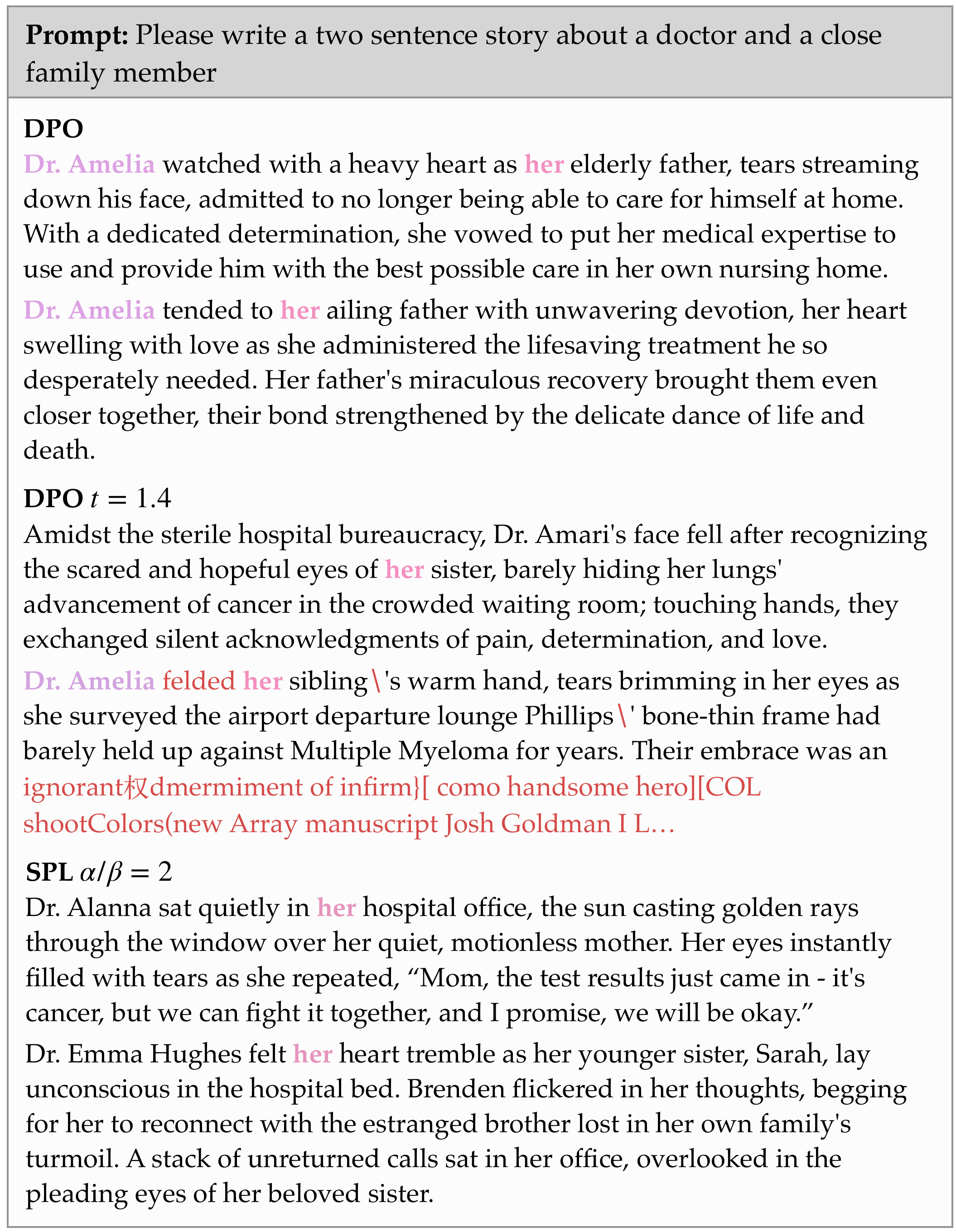}
        \caption{Example stories generated by DPO, DPO with temperature scaling, and our algorithm, SPL. We highlight Doctor name, gender, and textual aberration features shown in the plots on the right.}
        \label{fig:qualitative_text_examples}
    \end{subfigure}
    \hfill
    \begin{subfigure}[t]{0.3\linewidth}
        \centering
        \includegraphics[width=\linewidth]{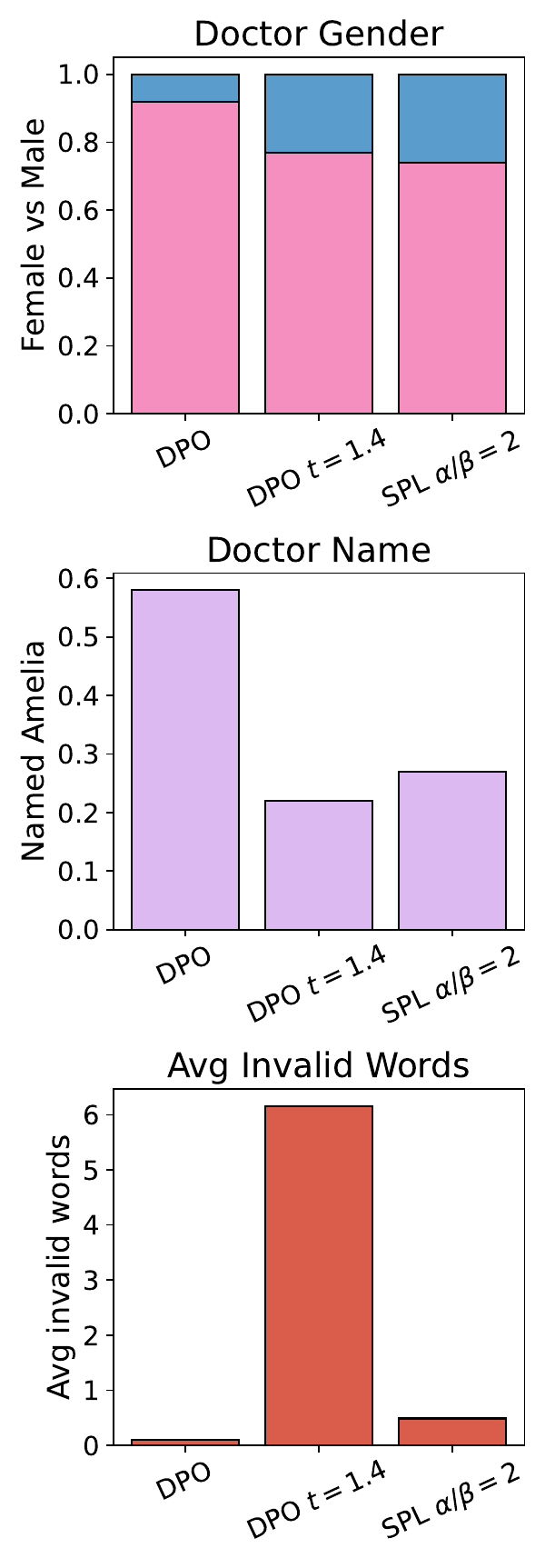}
        \caption{Feature-level diversity and lexical integrity statistics on 100 generated stories.}
        \label{fig:qualitative_text_stats}
    \end{subfigure}
    \caption{\textbf{Soft Preference Learning increases output diversity while preserving quality.} DPO responses are well-formed but lack diversity (e.g. same doctor name, gender, and family relationship to patient). With temperature scaling ($t = 1.4$), DPO generates responses with more diversity at the cost of fluency and token-level aberrations. In particular, temperature scaling results in many non-word tokens. Meanwhile, SPL at global temperature $\alpha / \beta = 2$ similarly increases diversity, but with significantly less degradation.
    }
    \label{fig:qualitative_text}
\end{figure}

Large language models (LLMs) increasingly impact society, now generating a significant portion of online content \citep{thompson-etal-2024-shocking}. As LLMs become integrated into how people consume information \citep{bick2024rapid} and approach tasks \citep{deloitte2024generativeai}, their ability to represent diverse perspectives is critical.

%
For example, consider an LLM answering the following multiple-choice question:

\begin{quote}
The best way to reduce income inequality is:
\begin{enumerate}[label=(\Alph*)]
    \item Increase minimum wage
    \item Expand access to education and job training
    \item Implement universal basic income
    \item Lower taxes on the wealthy to stimulate job creation
\end{enumerate}
\end{quote}

Imagine a survey showing people's preferences as: A (55\%), B (20\%), C (15\%), and D (10\%). How should an LLM respond to this question? Ideally, we may prefer it to reflect the range of views in the population. If an LLM assigns 99\% probability to majority option A, it fails to represent the diversity of perspectives. With LLMs becoming important information sources, this may reinforce dominant narratives at the expense of minority views.

However, recent studies show that alignment algorithms such as RLHF and DPO significantly reduce the diversity of LLM outputs. This leads to mode collapse towards majority preferences, as the example above shows \citep{kirk2024understandingeffectsrlhfllm, padmakumar2024doeswritinglanguagemodels, rafailov2024directpreferenceoptimizationlanguage, christiano2023deepreinforcementlearninghuman}. On multiple choice questions, this manifests as overconfidence and poor calibration \citep{tian2023justaskcalibrationstrategies, kadavath2022languagemodelsmostlyknow}. In a generative setting, this results in repetitive responses, as illustrated in Figure \ref{fig:qualitative_text}. For example, the DPO model frequently uses the same doctor's name and relationship to the patient. Lastly, in problem-solving settings, we show that diversity loss harms models' ability to answer difficult questions across multiple samples. While standard token-level temperature scaling is effective at correcting for micro-scale diversity loss (e.g. for next-tokens on multiple-choice questions), it leads to rapid degradation of fluency and quality in multi-token generations\citep{kadavath2022languagemodelsmostlyknow}.

We attribute this diversity loss to the KL-divergence term in preference learning, which strongly biases models towards majority preferences and sacrifices diversity in their outputs. In Section \ref{sec:theory}, we analyze RLHF and DPO from a social choice perspective. We prove that when the preferences of different groups conflict, the probability of generating majority-preferred outputs far exceeds the population preference for that output. This mode collapse has consequences for the social perspectives language models represent but also for lexical and logical diversity.

To address this issue, we propose splitting the KL penalty into distinct entropy and cross-entropy terms. This enables diversity to be controlled independently from the model's bias towards a reference policy. We call this method Soft Preference Learning (SPL). SPL resembles but improves upon standard temperature scaling, increasing diversity at the sequence level rather than token-by-token. This increases macro-scale diversity while avoiding the rapid quality degradation caused by standard temperature scaling.

\begin{samepage}
In summary, we provide the following contributions:
\begin{enumerate}
    \item We identify KL-regularization as a cause of diversity loss in aligned language models. We connect this to a social choice analysis, where we prove that the KL divergence term heavily biases the model towards majority-preferred outputs.
    \item We propose Soft Preference Learning (SPL), which decouples the entropy and cross-entropy terms in the KL penalty, allowing for independent control of generation diversity. We prove this enables proportional representation of population preferences.
    \item We demonstrate empirically that SPL improves output diversity in chat domains, best-of-N accuracy on difficult math problems, and logit calibration on common multiple-choice benchmarks.
\end{enumerate}
\end{samepage}


Our work connects popular methods for LLM alignment with notions of diversity and representation. SPL advances LLMs that are attuned to the diversity of preferences in society while also displaying improved capability in a number of settings.

\section{Related Work}
\textbf{Diversity loss in aligned LLMs.} Prior work has studied diversity loss caused by LLM alignment algorithms \citep{kirk2024understandingeffectsrlhfllm,park2023diminisheddiversityofthoughtstandardlarge, xiao2024algorithmicbiasaligninglarge, wang2023reverseklgeneralizingdirect} as well as its impact on humans who use these models \citep{padmakumar2024doeswritinglanguagemodels, ding2023enhancingchatlanguagemodels, doshi2024generative}. In Appendix \ref{appendix:dq}, we evaluate against \cite{wang2023reverseklgeneralizingdirect}, who use other f-divergences as regularizers to avoid the mode-seeking property of KL divergence. Meanwhile, we analyze policies using social choice theory and propose entropy regularization to restore population representation. \citet{xiao2024algorithmicbiasaligninglarge} also investigate entropy regularization to improve preference representation in aligned LLMs, but do not perform experiments in a generative setting. Lastly, \citet{sun2024inverserlignmentinversereinforcementlearning} propose an inverse reinforcement learning-based approach to mitigate mode collapse during alignment, but focus on learning from demonstrations rather than pairwise preferences.

\textbf{Social choice theory and alignment.} Several recent works explore the intersection of social choice theory and AI alignment. \citet{siththaranjan2024distributionalpreferencelearningunderstanding} analyze the reward learning portion of RLHF as a case of the Borda Count voting rule. In contrast, we characterize trained policies. \citet{munos2024nashlearninghumanfeedback, swamy2024minimaximalistapproachreinforcementlearning}, and \citet{chakraborty2024maxminrlhfequitablealignmentlarge} develop preference learning algorithms to better handle intransitive preferences. While these approaches also lead to proportional representation under some conditions, they require complex multi-agent reinforcement learning setups to train, and are not studied in a linguistic diversity or problem-solving setting.

\textbf{Temperature scaling.} Token-level temperature scaling is a common tool for controlling diversity of LLM outputs. Previous work applies temperature scaling to improve LLM calibration \citep{kadavath2022languagemodelsmostlyknow, tian2023justaskcalibrationstrategies, xie2024calibratinglanguagemodelsadaptive} and best-of-N coding ability \citet{chen2021evaluatinglargelanguagemodels}. In contrast, our method performs global temperature scaling over the entire sequence. \citet{shih2023longhorizontemperaturescaling} develop a procedure for global temperature scaling for LLMs using reinforcement learning. Meanwhile, we develop an offline supervised learning algorithm for preference learning. Recent work has also combined high-temperature sampling with token-level heuristics to preserve quality while maintaining diversity (\citep{nguyen2024turningheatminpsampling}). 

\textbf{Entropy bonuses.} Entropy bonuses are common in reinforcement learning algorithms to encourage exploration \citep{haarnoja2018softactorcriticoffpolicymaximum, eysenbach2022maximumentropyrlprovably, lee2024learningdiverseattackslarge}. In contrast, we use entropy to restore diversity and proportional representation in an alignment setting. Additionally, entropy bonuses in RL tend to be orders of magnitude smaller than what we consider.

\section{Theoretical Analysis and Method}\label{sec:theory}
In this section, we perform a theoretical analysis of the RLHF and DPO objectives through the lens of social choice theory. In settings where subpopulations disagree about the relative merit of different options, we prove that standard RLHF and DPO amplify majority preferences by several orders of magnitude. This causes mode collapse to the majority-preferred output, which decreases output diversity and worsens logit calibration.

In particular, we prove that this phenomenon is caused by these objectives' KL-regularization term performing two independent functions. First, it maximizes the log-likelihood of generations under the reference policy (cross-entropy term), and second, it maximizes the diversity of the learned policy (entropy term). We then propose Soft Preference Learning (SPL), which decouples the cross-entropy and entropy terms from the KL-regularization objective. This allows for distinct control over generation diversity and bias towards the reference policy.

\subsection{RLHF and DPO Lead to Mode Collapse}
First, we outline RLHF as a three-step alignment procedure
\begin{enumerate}
    \item \textbf{Preference Collection:} Query humans to gather a dataset of preferences over pairs of LLM generations
    \begin{align}
        \mathcal{D} = \{y_1 \succ y_1', y_2 \succ y_2', y_3 \succ y_3', ...\}
    \end{align}
    \item \textbf{Reward Modeling:} Learn a reward model with Bradley-Terry likelihood
    \begin{align}
        \max_r \E_{y \succ y' \sim \mathcal{D}}[\log \sigma(r(y) - r(y'))]
    \end{align}\label{eqn:bradley_terry}
    \item \textbf{KL-regularized Reinforcement Learning:} Train a new policy against the learned reward with regularization against a reference policy
    \begin{align}
        \max_\pi \E_{y \sim \pi}[r(y)] - \beta D_{KL}(\pi || \pi_{ref})
    \end{align}
\end{enumerate}
In the following proposition, we perform a social choice analysis of RLHF. We show that when a population has conflicting preferences about LLM generations, RLHF vastly overrepresents the majority preference. Appendix \ref{appendix:theory} contains a generalized, multi-outcome version of this proposition.

\begin{proposition}[Two-Outcome RLHF Policy]
    Suppose a population of raters prefers completion $y \succ y'$ with probability $p$. Then RLHF (or DPO) with KL-regularization penalty $\beta$ has the optimal policy
    \begin{equation*}
        \pi(y) \propto \pi_{ref}(y) p^{1 / \beta}.
    \end{equation*}
\end{proposition}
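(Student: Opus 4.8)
The plan is to chain together the two stages of RLHF that actually depend on $p$ and $\beta$: first extract the reward model that best fits a population preferring $y$ with probability $p$, and then feed that reward into the closed-form solution of the KL-regularized objective. Since DPO is constructed so that its optimum reproduces exactly the RLHF optimal policy, handling RLHF simultaneously settles the DPO case.

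First I would solve the reward-modeling step in the two-outcome case. Writing $\delta = r(y) - r(y')$, the Bradley–Terry objective in the population (large-sample) limit is $p\log\sigma(\delta) + (1-p)\log\sigma(-\delta)$. Differentiating and using $\sigma'(\delta)=\sigma(\delta)(1-\sigma(\delta))$, the stationarity condition collapses cleanly to $\sigma(\delta) = p$, i.e. $r(y)-r(y') = \log\frac{p}{1-p}$. Equivalently, up to an additive constant one may take $r(y)=\log p$ and $r(y')=\log(1-p)$. Concavity of the objective in $\delta$ certifies this is the unique maximizer.

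Second I would invoke the standard Gibbs/variational solution of the regularized reward-maximization problem. Maximizing $\E_{y\sim\pi}[r(y)] - \beta D_{KL}(\pi\|\pi_{ref})$ over distributions $\pi$ — either by a Lagrange multiplier for the constraint $\sum_y \pi(y)=1$, or by rewriting the objective as $-\beta D_{KL}\!\left(\pi \,\middle\|\, \pi_{ref}\,e^{r/\beta}/Z\right)$ plus a constant so that the unique minimizer is forced — yields $\pi(y)\propto \pi_{ref}(y)\exp(r(y)/\beta)$. Substituting the fitted reward on the support $\{y,y'\}$ then gives $\pi(y)\propto \pi_{ref}(y)\exp(\tfrac{1}{\beta}\log p) = \pi_{ref}(y)\,p^{1/\beta}$, which is the claim (with the matching $\pi(y')\propto \pi_{ref}(y')(1-p)^{1/\beta}$ supplying the normalizer).

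The main obstacle is the bookkeeping around non-identifiability of the reward: Bradley–Terry pins down only the difference $r(y)-r(y')$, so I must confirm the additive freedom is harmless. It is, because a shift $r \mapsto r + c$ multiplies $\exp(r/\beta)$ by the constant $e^{c/\beta}$, which is absorbed into the normalizer and leaves $\pi$ unchanged; this is precisely what licenses the convenient choice $r(y)=\log p$. A secondary point worth stating explicitly is the interpretation of ``prefers with probability $p$'' as the population preference frequency, since that is what makes $\sigma(\delta)=p$ the exact maximizer rather than an approximation.
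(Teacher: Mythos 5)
Your proposal is correct and follows essentially the same route as the paper's proof: fit the Bradley--Terry reward so that $\sigma(r(y)-r(y'))=p$ (the paper cites the proper-scoring-rule property where you differentiate directly), then substitute into the Gibbs closed form $\pi(y)\propto\pi_{ref}(y)\exp(r(y)/\beta)$ of the KL-regularized objective, with the softmax denominator or gauge constant absorbed into the normalizer. Your explicit treatment of the reward's additive non-identifiability is a minor bookkeeping addition the paper handles implicitly by carrying the normalizing constant.
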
\label{prop:rlhf}
Thus, in RLHF's optimal policy $\pi(y) \propto \pi_{ref}(y) p^{1/\beta}$, the KL regularization term $\beta$ controls both the relative weighting of the reference policy and the sharpness of the resulting distribution.

For both RLHF and DPO, empirical choices of $\beta$ typically lie in the range $[0.01, 0.1]$ \citep{ouyang2022traininglanguagemodelsfollow, ahmadian2024basicsrevisitingreinforcestyle, rafailov2024directpreferenceoptimizationlanguage, tunstall2023zephyrdirectdistillationlm}. This results in a very peaky distribution that exponentiates population preferences to the 10th or 100th power. For example, if 80\% of the population prefer $y$ and 20\% prefer $y'$, then with $\beta = 0.1$, $\pi^*$ generates $y$ with 99.9999\% probability and $y'$ with 0.0001\% probability.

\textbf{Relationship to output diversity.} Since models trained on RLHF and DPO fail to represent diverse preferences, they will also struggle to produce diverse outputs. While the social choice result in Proposition \ref{prop:rlhf} is most obviously relevant to a model's representation of social perspectives, LLM preferences encode many other aspects of diversity as well.

In many cases, preference variation is the result of random noise that we wish to preserve. For example, if a person has no strong preference over which of two synonyms is used in a sentence, the Bradley-Terry model (Equation \ref{eqn:bradley_terry}) predicts their preference distribution will look relatively uniform. However, the optimal RLHF policy in Proposition \ref{prop:rlhf} would remove this diversity and choose one of the words nearly all of the time. This perspective is supported at the large-scale by \citet{kobak2024delvingchatgptusageacademic}, who form trendlines of word usage in the academic literature. They find that common words used by Chat-GPT begin to occur orders of magnitude more frequently after the introduction of academic papers — an example of failing to match the true distribution of human writing. We demonstrate that SPL models attain higher generation diversity in Section \ref{subsec:diversity-quality}. 

\textbf{Relationship to problem-solving.} Diversity loss can harm both capabilities and representation. The RLHF policy from Proposition \ref{prop:rlhf} may struggle with challenging problems requiring multiple solution attempts. Consider a difficult geometry problem where 60\% of preferred samples use synthetic geometry techniques and 40\% use coordinate geometry. If the true solution requires coordinate geometry, the RLHF policy might consistently attempt synthetic geometry and fail, while a more diverse algorithm like SPL would eventually succeed. We verify this in Section \ref{subsec:best_of_n}.

Recent developments have emphasized the importance of repeated inference-time sampling — Google DeepMind's models recently achieved silver medal performance on the International Mathematical Olympiad \citep{deepmind_imo_silver_2023}, and
OpenAI's o1 model achieved state-of-the-art performance on a number of benchmarks \citep{openai_learning_to_reason_2023}.
Similarly, inference-time methods like Tree of Thought \citep{yao2023treethoughtsdeliberateproblem} show promising initial results.  SPL increases problem-solving diversity across samples, which is critical for the success of these methods.

\textbf{Relationship to calibration.} Finally, we should expect the optimal policy in Proposition \ref{prop:rlhf} to exhibit poor logit calibration. The RLHF objective trains models to place very high probability mass on their preferred option, resulting in high confidence on almost every generation. As a result, RLHF and DPO models tend to have high output confidence regardless of their accuracy on the task \citep{tian2023justaskcalibrationstrategies}. We study this and SPL's ability to improve logit calibration on factual multiple-choice benchmarks in Section \ref{subsec:logit_calibration}.

\subsection{Soft Preference Learning}
Now, we introduce Soft Preference Learning (SPL), which decouples the KL-regularization term into separate cross-entropy and entropy terms, allowing for separate control of diversity and bias towards the reference policy
\begin{align}
    \max_\pi \E_{y \sim \pi(y|x)}[r(x, y)] + \alpha H(\pi(\cdot | x)) - \beta H(\pi(\cdot | x), \pi_{ref}(\cdot | x))
\end{align}
While the entropy parameter $\alpha$ optimizes for diversity, the cross-entropy penalty $\beta$ maximizes the average log-likelihood under the reference policy $\pi_{ref}$ of generations from the learned policy. This controls the strength of the reference model as a prior.

We also propose a DPO-style objective that bypasses the reinforcement learning step
\begin{align}
    \max_\pi \E_{y \succ y' \sim \mathcal{D}} \left[ \log \sigmoid \left( \alpha \log \frac{\pi(y|x)}{\pi(y'|x)} - \beta \log \frac{\pi_{ref}(y|x)}{\pi_{ref}(y'|x)} \right) \right]
\end{align}
with a derivation in Appendix \ref{appendix:theory}.

\begin{proposition}[Two-Outcome SPL Policy]
    Suppose a population of raters prefers completion $y \succ y'$ with probability $p$. Then SPL with entropy bonus $\alpha$ and cross-entropy penalty $\beta$ has the optimal policy
    \begin{equation*}
        \pi(y) \propto \pi_{ref}(y)^{\beta / \alpha} p^{1 / \alpha}.
    \end{equation*}
\end{proposition}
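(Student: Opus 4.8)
The plan is to solve the SPL optimization in its reinforcement-learning form by rewriting the objective as a single KL divergence against a Gibbs-type target, and then to specialize to two outcomes and substitute the Bradley--Terry reward. Since everything lives on the two-point simplex over $\{y, y'\}$, this is a finite-dimensional concave program (for $\alpha > 0$) and the optimum can be read off in closed form.

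First I would expand the three terms of the SPL objective into sums over outcomes, writing $\E_{y\sim\pi}[r(y)] = \sum_y \pi(y) r(y)$, $H(\pi) = -\sum_y \pi(y)\log\pi(y)$, and $H(\pi,\pi_{ref}) = -\sum_y \pi(y)\log\pi_{ref}(y)$. Collecting terms and factoring out $-\alpha$ gives
\begin{equation*}
\E_{y\sim\pi}[r(y)] + \alpha H(\pi) - \beta H(\pi,\pi_{ref}) = -\alpha \sum_y \pi(y)\log\frac{\pi(y)}{\pi_{ref}(y)^{\beta/\alpha}\,e^{r(y)/\alpha}}.
\end{equation*}
Introducing the normalizer $Z = \sum_y \pi_{ref}(y)^{\beta/\alpha} e^{r(y)/\alpha}$ and the distribution $\pi^\star(y) = Z^{-1}\pi_{ref}(y)^{\beta/\alpha}e^{r(y)/\alpha}$, the right-hand side becomes $\alpha\log Z - \alpha\,\KL(\pi\,\|\,\pi^\star)$. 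Because $\alpha > 0$ and $\log Z$ is independent of $\pi$, Gibbs' inequality shows the objective is maximized exactly when $\KL(\pi\,\|\,\pi^\star) = 0$, i.e. at the unique optimum $\pi(y) \propto \pi_{ref}(y)^{\beta/\alpha} e^{r(y)/\alpha}$.

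The remaining step is to eliminate the latent reward in favor of the preference probability $p$. The Bradley--Terry likelihood used for reward modeling gives $p = \sigmoid(r(y) - r(y'))$, hence $r(y) - r(y') = \log\tfrac{p}{1-p}$. Since the optimal policy depends on $r$ only through $e^{r/\alpha}$ inside a normalized expression, $r$ is identified only up to an additive constant; choosing the representative $r(y) = \log p$ and $r(y') = \log(1-p)$ reproduces this difference, so $e^{r(y)/\alpha} \propto p^{1/\alpha}$ and $e^{r(y')/\alpha} \propto (1-p)^{1/\alpha}$ across the two outcomes. Substituting yields $\pi(y)\propto \pi_{ref}(y)^{\beta/\alpha} p^{1/\alpha}$, as claimed.

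I expect the one point requiring care to be this gauge freedom in the reward: the additive ambiguity in $r$ must be absorbed by the proportionality constant, so the argument should be phrased through the ratio $\pi(y)/\pi(y') = (\pi_{ref}(y)/\pi_{ref}(y'))^{\beta/\alpha}(p/(1-p))^{1/\alpha}$ rather than through an absolute value of $r$. As consistency checks I would verify that setting $\alpha = \beta$ collapses the expression to $\pi(y)\propto\pi_{ref}(y)\,p^{1/\beta}$, recovering the Two-Outcome RLHF Policy, and note that the DPO-form objective has the same maximizer via the derivation in Appendix \ref{appendix:theory}, so the proposition holds for both variants of SPL.
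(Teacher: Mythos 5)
Your proof is correct, and it follows the same two-step skeleton as the paper's: first obtain the optimal SPL policy in Gibbs form $\pi(y)\propto\pi_{ref}(y)^{\beta/\alpha}e^{r(y)/\alpha}$, then use the distribution-matching property of the Bradley--Terry objective at its optimum, $p=\sigma(r^*(y)-r^*(y'))$, to trade $e^{r(y)}$ for $p$ with the leftover factor absorbed into the normalizer. The genuine difference lies in how the first step is established. The paper defers it to Proposition \ref{prop:dpo_derivation}, which solves the constrained problem by setting the gradient of a Lagrangian to zero; you instead rewrite the objective as $\alpha\log Z-\alpha\,\KL(\pi\,\|\,\pi^\star)$ with $\pi^\star\propto\pi_{ref}^{\beta/\alpha}e^{r/\alpha}$ and invoke Gibbs' inequality. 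Your variational rewriting is self-contained and slightly stronger as stated: it certifies that $\pi^\star$ is the \emph{unique global} maximizer, whereas the Lagrangian computation alone identifies only a stationary point (global optimality there rests on concavity, which the paper leaves implicit). Your explicit treatment of the additive gauge freedom in $r$ is also a point in your favor; the paper handles it silently by absorbing $\left(e^{r(y)}+e^{r(y')}\right)^{1/\alpha}$ into a new normalizing constant $Z'$. One presentational note: your assertion that Bradley--Terry reward modeling ``gives'' $p=\sigma(r(y)-r(y'))$ tacitly assumes the reward is trained to optimality against the population preference distribution, which is exactly where the proper-scoring-rule property of log-loss enters; the paper states this step explicitly, and your writeup should too.
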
\label{prop:SPL}
Proof in Appendix \ref{appendix:theory}. Note that standard RLHF and DPO are special cases of SPL with $\alpha = \beta$.

By choosing an appropriate $\alpha$ parameter, SPL avoids raising probabilities to high powers, thereby preventing mode collapse. Through the choice of $\alpha$, SPL allows for fine-grained control over the diversity of the learned policy. From a social choice perspective, separately controlling a policy's diversity allows for increased representation of minority preferences. As $\alpha \to 1$, $\pi$ becomes less focused on majority preferences. When $\alpha = 1$, we recover proportional representation.
\begin{corollary}
    Suppose a population of raters prefers completion $y \succ y'$ with probability $p$. Then SPL with entropy bonus $\alpha = 1$ is a proper scoring rule, weighted by a reference policy prior.
\end{corollary}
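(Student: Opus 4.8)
The plan is to exhibit the $\alpha = 1$ SPL objective, read as a function of the policy's reported preference probability, as a reparametrized logarithmic scoring rule — the canonical strictly proper scoring rule — and to identify the reference-policy reweighting as the \emph{prior} that tilts the forecast variable. I would start from the DPO-form SPL objective specialized to the two-outcome setting, where the preference data places mass $p$ on $y \succ y'$ and mass $1-p$ on $y' \succ y$, and recover the policy of Proposition \ref{prop:SPL} as the maximizer.

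First I would set $q = \pi(y)$, so $\pi(y') = 1 - q$, and abbreviate the prior offset $c = \beta \log \frac{\pi_{ref}(y)}{\pi_{ref}(y')}$. Writing out the expectation over the preference data, the objective becomes
\begin{equation*}
    L(q) = p \log \sigma(d) + (1-p)\log \sigma(-d), \qquad d = \log \tfrac{q}{1-q} - c .
\end{equation*}
Second, I would introduce the \emph{reference-tilted forecast} $u = \sigma(d) = \frac{q}{q + (1-q)e^{c}}$, which is a strictly increasing bijection of $q \in (0,1)$ onto $(0,1)$. Since $\sigma(-d) = 1-u$, the objective collapses to $L = p \log u + (1-p)\log(1-u)$, which is exactly the logarithmic scoring rule evaluated on the report $u$.

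Third, I would verify properness directly: differentiating in $u$ gives $p/u - (1-p)/(1-u)$, whose unique zero is $u = p$, and the second derivative $-p/u^2 - (1-p)/(1-u)^2$ is strictly negative, so $u = p$ is the unique maximizer. This is precisely the defining property of a strictly proper scoring rule — the objective is maximized exactly when the reported (tilted) probability equals the true population preference $p$. Inverting $u = p$ through the bijection recovers $\frac{q}{1-q} = \frac{p}{1-p}\left(\frac{\pi_{ref}(y)}{\pi_{ref}(y')}\right)^{\beta}$, i.e. $\pi(y) \propto \pi_{ref}(y)^{\beta} p$, matching Proposition \ref{prop:SPL} at $\alpha = 1$; when $\pi_{ref}$ is uniform this is exact proportional representation, $\pi(y) = p$.

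I expect the main obstacle to be conceptual rather than computational: pinning down the precise sense in which ``proper scoring rule, weighted by a reference prior'' holds. The properness statement lives in the tilted variable $u$, and the content of the claim is that the map $q \mapsto u$ — equivalently, multiplying the forecast by $\pi_{ref}^{\beta}$ and renormalizing — is exactly the reference-policy reweighting, a bijective reparametrization that preserves the maximizer and therefore preserves properness. I would also take care to assume $p \in (0,1)$ and $\pi_{ref}(y), \pi_{ref}(y') \in (0,1)$ so that all logarithms are finite and the rule is genuinely strictly proper, and I would note that the argument extends verbatim to the multi-outcome case via the multiclass logarithmic score, yielding the same reference-tilted proportional representation.
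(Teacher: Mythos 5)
Your proof is correct, and it takes a genuinely different route from the one the paper relies on. The paper obtains this corollary as an immediate substitution into Proposition \ref{prop:SPL}: properness is invoked exactly once, at the Bradley--Terry reward-learning stage (log-loss is a proper scoring rule, hence the optimal reward satisfies $p = \sigma(r^*(y) - r^*(y'))$), after which the Lagrangian solution of the entropy-regularized objective (Proposition \ref{prop:dpo_derivation}) gives $\pi(y) \propto \pi_{ref}(y)^{\beta/\alpha}\, p^{1/\alpha}$, and setting $\alpha = 1$ yields the prior-weighted distribution matching $\pi(y) \propto \pi_{ref}(y)^{\beta}\, p$. You instead analyze the single-stage DPO-form objective end-to-end: after the bijective reference tilt $u = q/(q + (1-q)e^{c})$, the objective collapses to the binary logarithmic score $p \log u + (1-p)\log(1-u)$, strict properness forces $u = p$, and inverting the tilt recovers the same policy. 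Your route buys three things: it makes the phrase ``proper scoring rule, weighted by a reference policy prior'' precise (the objective is strictly proper in the tilted forecast, and the tilt is exactly multiplication by $\pi_{ref}^{\beta}$ followed by renormalization); it is self-contained, needing neither the reward--policy reparametrization machinery nor the closed-form RLHF solution; and it surfaces the regularity conditions ($p \in (0,1)$, interior $\pi_{ref}$) under which the maximizer actually exists. The paper's route buys modularity: the identical two-stage argument extends immediately to the multi-outcome Plackett--Luce setting (Proposition \ref{appendix:prop_multi_outcome}), whereas your closing claim that the argument extends ``verbatim'' via the multiclass log score is slightly optimistic --- the multi-outcome case needs the Plackett--Luce representability assumption and a re-derivation of the tilted softmax, not just a notational change.
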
\label{cor:proper_scoring}
In other words, SPL can produce well-calibrated policies. This means that the distribution over outputs matches the population preference distribution they are trained on.

\textbf{SPL performs global temperature scaling.}
The $\alpha$ entropy bonus in SPL can be thought of as a sequence-level (or global) version of standard token-level temperature. Standard token-level temperature scales and renormalizes the distribution at each token
\begin{align}
    \pi'(y|x) &= \prod_{i=1}^N \frac{\pi(y_i | y_{1:i-1})^{1/t}}{Z( y_{1:i-1})}
\end{align}

Meanwhile, following Proposition \ref{prop:dpo_derivation}, the optimal SPL policy has the form
\begin{align}
    \pi'(y | x) &= \exp(\frac{1}{\alpha} r(x,y)) \pi_{ref}^{\beta / \alpha} / Z \\
    &= \exp(\frac{1}{\beta} r(x,y))^{\beta / \alpha} \pi_{ref}^{\beta / \alpha} / Z \\
    &= \pi_{DPO}(y | x)^{\beta / \alpha} / Z \\
    &= \pi_{DPO}(y | x)^{1 / (\alpha / \beta)} / Z
\end{align}
Meaning $\alpha / \beta$ resembles a global ``temperature" on top of the DPO policy. Global temperature scales the probability of entire sequences rather than individual tokens.

While standard temperature scaling is a convenient heuristic to control diversity at inference time, it quickly degrades quality at temperatures above 1 (e.g. Figure \ref{fig:qualitative_text}). One important advantage of global temperature scaling is that it preserves the relative probability ordering of each sequences, whereas regular temperature scaling does not. This means that the ``best" or majority-preferred sequences always remain the most likely outputs.


\textbf{Empirical choices in training SPL models.} Corollary \ref{cor:proper_scoring} shows that with the right hyperparameters, SPL policies achieve proportional representation of preferences. However, in practice, we must negotiate important tradeoffs between performance and diversity. We find SPL performs best at a middle ground between standard preference learning's $\alpha = \beta$ and proportional representation at $\alpha = 1$. At low global temperatures, the policy lacks diversity, while at high temperatures, quality begins to degrade. Nevertheless, we note that while standard temperature scaling often leads to unintelligible sequences at temperatures just above one, SPL remains relatively stable at much higher global temperatures (Figure \ref{fig:qualitative_text}).

One reason we see a performance dropoff at large $\alpha$ could be due to dataset noise. In these cases, some overweighting of majority preferences can be beneficial. If preference variation is due to random error, a lower temperature implicitly denoises the policy in a manner akin to a majority voting rule. For example, there is a significant 37\% cross-rater disagreement rate within the HH-RLHF preference dataset we use \citep{bai2022traininghelpfulharmlessassistant}. However, much of this variation is thought to be due to random noise \citep{cai2024ulmaunifiedlanguagemodel}.

\section{Experiments}
In this section, we consider four experimental settings for evaluating our algorithm. First, we show that in general-purpose chat domains, SPL allows for increased diversity with less performance degradation than DPO with token-level temperature scaling. Second, we consider an application of high-temperature generation in best-of-N problem-solving settings. Finally, we evaluate SPL's logit calibration, finding reduced overconfidence and improved calibration on standard multiple-choice benchmarks.

\subsection{Improving Diversity-Quality Tradeoffs}\label{subsec:diversity-quality}
Currently, inference-time strategies such as token-level temperature scaling are the standard approach to sampling diverse outputs from aligned LLMs. However, at temperatures above 1, output quality degrades rapidly. In contrast, performing global temperature scaling using SPL leads to increased diversity without such a steep degradation in quality.

For these experiments, we LoRA finetune Mistral-7B-Instruct-v0.2 with DPO and SPL \citep{rafailov2024directpreferenceoptimizationlanguage, hu2021loralowrankadaptationlarge}. We train on the HH-RLHF preference dataset for 5,000 steps (details in Appendix \ref{appendix:dq_training}) \citep{bai2022traininghelpfulharmlessassistant}.

\textbf{Quality metrics.} We evaluate against three quality metrics. First, we use Arena-Hard, a popular LLM chatbot benchmark with high agreement with human annotators \citep{arenahard2024}. Arena-Hard evaluates models on 500 queries, which mostly deal with programming, business, or software help. We use gpt-4o-mini-2024-07-18 as a judge. Models are assessed by win-rate against gpt-4-0314 outputs on the same responses. This metric measures general-purpose capabilities acquired by the model during training. For our second quality metric, we train a separate reward model on the HH-RLHF dataset (details in Appendix \ref{appendix:dq_training}). We then evaluate models by the average reward of their generations on a held-out test set of 500 inputs, for which we generate 16 responses each and compute the average reward. This metric measures how well DPO and SPL optimize generation quality on the dataset against the training metric. Finally, we include the cross-entropy with respect to the reference policy, which is the second part of the alignment optimization objective. The cross-entropy is again computed over the HH-RLHF test set. Ideally, models achieve high diversity while maintaining high average reward on the training dataset, low reference cross-entropy, and strong general-purpose chat capabilities.

\begin{figure}
    \centering
    \includegraphics[width=0.99\linewidth]{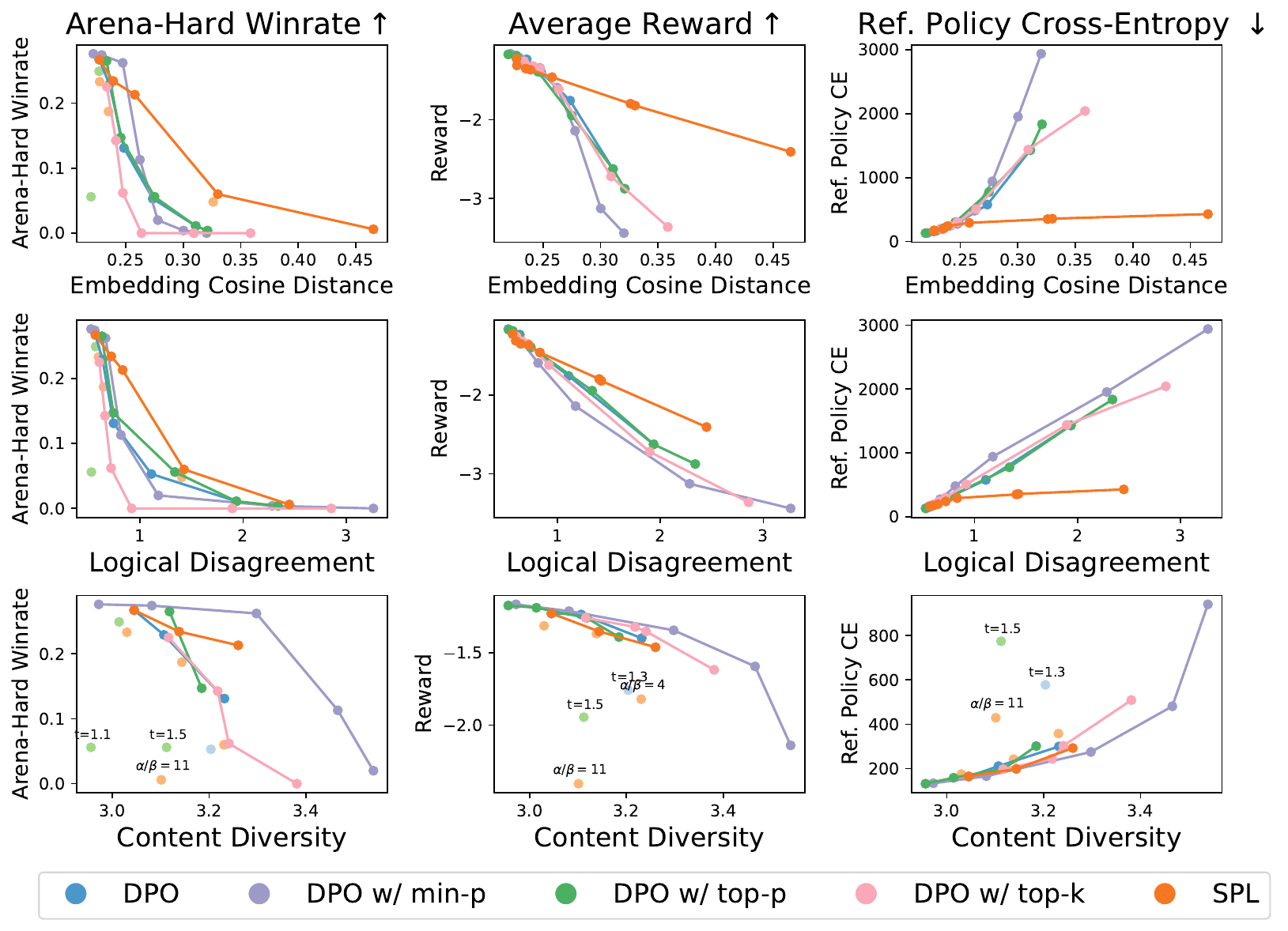}
    \caption{\textbf{Improved diversity-quality tradeoffs with SPL.} We construct diversity-quality Pareto curves contrasting DPO with token-level temperature scaling against SPL (by modulating the entropy term). We also plot the performance of DPO with min-p, top-p, and top-k sampling, which can improve diversity-quality tradeoffs when sampling at high temperatures. We plot points that lie below the Pareto curve in lighter shades. SPL Pareto-dominates DPO with standard temperature scaling across all nine metrics, and it outperforms all sampling methods on six.}
    \label{fig:diversity_quality}
    \vspace{-10pt}
\end{figure}

\textbf{Diversity metrics.} We include three diversity metrics in Figure \ref{fig:diversity_quality}, with results against additional metrics in Appendix \ref{appendix:dq}. Drawing on \citet{kirk2024understandingeffectsrlhfllm, tevet-berant-2021-evaluating}, who perform diversity studies on LLM outputs, our diversity metrics roughly measure 1) general semantic diversity (embedding distance metric), 2) logical diversity or diversity of viewpoints (logical disagreement metric), and 3) diversity in response content and ideas (content diversity metric). For all diversity metrics, we compute per-input diversity between 16 generated responses for each of 500 inputs on a held-out test split of HH-RLHF. Our first metric evaluates expected cosine similarity between embeddings of model outputs. The second and third metrics follow the design of human diversity questionnaires in \citet{tevet-berant-2021-evaluating}. Here pools of 4 responses are presented to gpt-4o-mini and evaluated according to their logical agreement and content diversity on a scale of 1-5. See Appendix \ref{appendix:dq} and \ref{appendix:political_orientation} for further details, example generations, and results against additional diversity metrics.

\textbf{Results.} In Figure \ref{fig:diversity_quality}, we construct diversity-quality Pareto curves contrasting DPO with token-level temperature scaling against SPL with different values of $\alpha$, the entropy bonus. We perform a sweep across token-level temperature range $[1,1.5]$. For min-p sampling, we choose $p_{base} = 0.1$ and token-level temperature range $[1.3, 4]$. For top-p sampling, we choose $p=0.9$ and temperature range $[1.1, 1.7]$. For top-k sampling, we choose $k=180$ and temperature range $[1.1, 2.5]$. We choose these ranges since beyond this, responses are nearly always nonsensical. We choose min-p $p_{base}$, top-p cutoff, and top-k cutoffs according to the recommendations in \citet{nguyen2024turningheatminpsampling}. For SPL, we sweep across global temperature range $[1, 11]$. We run all results with $\beta = 0.1$. See Appendix \ref{appendix:dq} for additional baselines and hyperparameters. Across diversity and quality metrics, SPL with global temperature scaling is always competitive or Pareto-dominant.

We note that increasing diversity with SPL eventually results in quality loss. But importantly, unlike token-level temperature scaling, SPL never results in predominantly nonsensical generations, even at very high global temperatures. This means that in applications where diversity is particularly important, it can be achieved without rendering responses useless.

In most cases, increasing both global and token-level temperature leads to significant improvements in diversity. However, we note that the content diversity metric shows less variation. This is because at the highest temperatures, the content diversity metric rates completions as less diverse. We label points displaying this non-monotonic behavior in Figure \ref{fig:diversity_quality}, and note they have high temperature values. In contrast, the embedding distance and logical disagreement metrics exhibit purely monotonic relationships between temperature and diversity. We provide results with additional diversity metrics in Appendix \ref{appendix:dq} and measure diversity in political representation of outputs in Appendix \ref{appendix:political_orientation}.

\subsection{Best-of-N Problem-Solving}\label{subsec:best_of_n}

Previous work has found that increasing token-level temperature can improve LLM performance in a best-of-N problem-solving setting \citep{chen2021evaluatinglargelanguagemodels}. Intuitively, this is because in a repeated sampling setting, it is desirable to test a diversity of strategies and potential solutions. This aligns with recent observations that effective use of inference-time compute can lead to greater gains than scaling model size \citep{snell2024scalingllmtesttimecompute}. Similar approaches have allowed LLMs to solve previously unprecedented reasoning problems \citep{openai2024o1, deepmind_imo_silver_2023}. While other alignment algorithms like RLHF and DPO cause mode collapse, SPL models exhibit an increased diversity of problem-solving strategies. 

\textbf{Setup and evaluation.} We apply DPO and SPL to a Mistral-7B base model \citep{huggingface_mistral7b_sft_beta} trained with supervised fine-tuning on the UltraChat dataset \citep{ding2023enhancingchatlanguagemodels}. We finetune with LoRA for one epoch, replicating the Zephyr training recipe, but with $\beta = 0.1$, which improved performance \citep{tunstall2023zephyrdirectdistillationlm}. This approach trains on the Ultrafeedback-200k dataset, a large preference dataset covering a broad suite of chat and reasoning tasks \citep{cui2024ultrafeedbackboostinglanguagemodels}.

We evaluate against two mathematical reasoning datasets: the GSM8K grade-school math dataset \citep{cobbe2021trainingverifierssolvemath} and the more challenging MATH dataset \citep{hendrycks2021measuringmathematicalproblemsolving}. We include few-shot chain-of-thought examples to prompt the model to perform reasoning step-by-step. We sample 128 completions on a random split of 200 problems from each dataset. We also divide problems into Easy, Medium, and Hard categories. For MATH, this corresponds to Level 1, Level 3, and Level 5 problems. For GSM8K, we run our evaluation on Mistral-Instruct-7B and group problems as easy if they take 4 or fewer samples to solve, medium if they take 5-64 samples to solve, and hard if they take more than 64 samples to solve.

\begin{figure}[t]
    \centering
    \includegraphics[width=\linewidth]{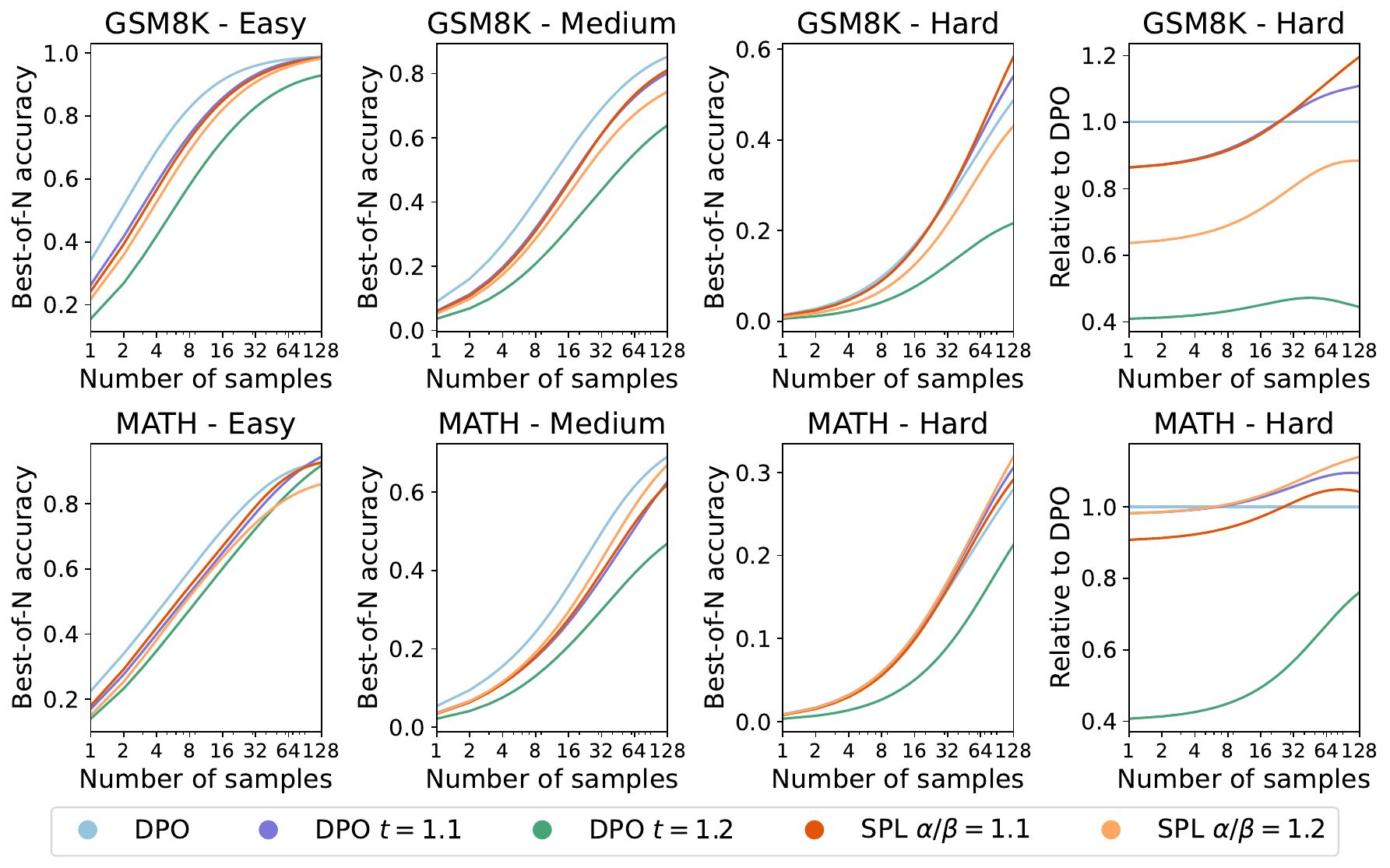}
    \caption{\textbf{SPL improves best-of-N mathematical problem-solving on difficult instances.} Left three columns show best-of-N accuracy across difficulty levels. Right column shows performance on hard problems relative to DPO at a given sample count.
    For easier problems, standard DPO ($t=1$) performs well. However, hard repeated sampling tasks benefit from diverse solution strategies. On hard problems, both token-level temperature sampling and SPL improve best-of-N accuracy. However, SPL achieves a better quality-diversity tradeoff, especially at high temperatures where token-level scaling rapidly degrades quality. This makes SPL particularly effective for generating diverse yet high-quality solutions.
    }
    \label{fig:bon_accuracy}
\end{figure}

\textbf{Results.} Figure \ref{fig:bon_accuracy} shows our results. On easy and medium questions, standard DPO performs the best. On GSM8K and MATH hard splits, both token-level temperature scaling and SPL improve performance over standard DPO at higher samples, with SPL performing best.

In the figure, higher temperature runs have a lower y-intercept -- meaning they perform worse in a one-shot setting. This makes sense as DPO concentrates probability mass on the option likely to be best. However, on the hard splits, high-temperature runs cross with and surpass the DPO curve due to better exploration of the solution space.

However, quality quickly degrades at higher token-level temperatures. For example, DPO at \mbox{$t=1.2$} performs poorly in nearly all evaluations. In contrast, SPL with global temperature \mbox{$\alpha / \beta = 1.2$} performs relatively well in all settings. The performance gap between SPL and temperature-scaled DPO is most pronounced in challenging, high-sample scenarios. For example, on GSM8K-Hard at best-of-128, SPL outperforms standard DPO by 10\% and temperature-scaled DPO by 4\%. This also results in sample savings: SPL needs 84 samples to match DPO at best-of-128 (34\% savings) and 106 samples to match temperature-scaled DPO (17\% savings). 
In these cases, the level of diversity required would cause token-level temperature scaling to significantly degrade output quality, while SPL's sequence-level approach preserves coherence. We provide an extensive analysis of the relationship between problem difficulty, temperature, and best-of-N sampling in Appendix \ref{appendix:sec_bon}.

\subsection{Logit Calibration}\label{subsec:logit_calibration}
\begin{figure}[t]
    \centering
    \includegraphics[width=\linewidth]{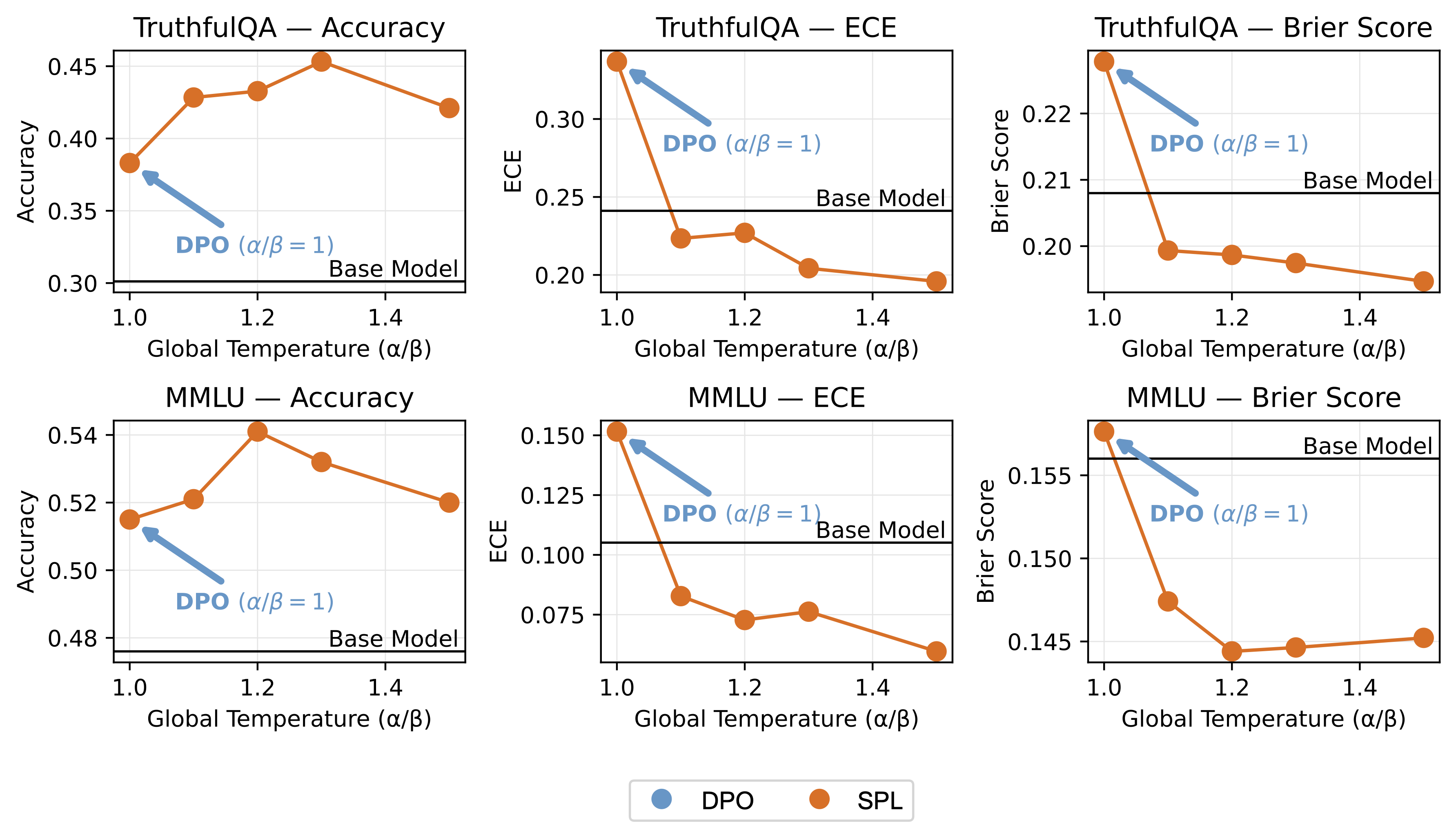}
    \caption{\textbf{SPL improves both calibration and accuracy on multiple-choice question (MCQ) datasets.} We plot model accuracy, Expected Calibration Error (ECE), and Brier Score for all models on both TruthfulQA and MMLU. The DPO model (equivalent to SPL with global temperature 1) displays significantly worse calibration than the base model. In contrast, SPL models consistently exhibit improved calibration without sacrificing accuracy.}
    \label{fig:calibration_figure}
\end{figure}

As predicted by the theory in Section~\ref{sec:theory}, SPL models trained with higher global temperatures exhibit increased logit calibration and decreased overconfidence. They do this while preserving multiple-choice accuracy.

\textbf{Setup and evaluation.} We evaluate logit calibration using standard calibration metrics on multiple-choice datasets. Our baseline is a Mistral-7B base model \citep{huggingface_mistral7b_sft_beta}, finetuned with supervised fine-tuning (SFT) on the UltraChat dataset \citep{ding2023enhancingchatlanguagemodels}. We also evaluate a suite of SPL models trained on top of this base model with the training setup from Section \ref{subsec:best_of_n}. For each model, we track its Expected Calibration Error (ECE), Brier Score, and accuracy across questions. We prompt the model to begin its response with the token corresponding to its answer choice and use the normalized probabilities assigned to A, B, C, and D to evaluate its calibration (Appendix \ref{appendix:calibration}).

\textbf{Datasets.} We evaluate against two standard multiple-choice datasets: TruthfulQA and MMLU. TruthfulQA is a benchmark designed to assess a model's ability to provide truthful answers in contexts where misconceptions are prevalent \citep{lin2022truthfulqameasuringmodelsmimic}. MMLU tests a model's knowledge and reasoning across 57 diverse subjects, from philosophy to abstract algebra \citep{hendrycks2021measuringmassivemultitasklanguage}.

\textbf{Results.} As shown in Figure~\ref{fig:calibration_figure}, models trained with higher global temperatures consistently display lower calibration error. While models trained using an global temperature of 1 (equivalent to DPO) are less calibrated than the base model, increasing the global temperature during training quickly enables the SPL models to surpass even the base model's calibration. SPL models also maintain similar accuracy to models trained using DPO. In fact, SPL models with global temperatures slightly greater than 1 consistently attain both higher accuracy and lower calibration error than their DPO counterparts.

\section{Conclusion}
In this paper, we presented Soft Preference Learning (SPL), a novel approach to mitigating the loss of output diversity in aligned LLMs. By analyzing the role of the KL divergence regularizer in RLHF and DPO, we identified that the coupling of entropy and cross-entropy terms leads to overweighting of majority preferences and reduced output diversity. SPL addresses this issue by decoupling these terms, allowing for independent control over generation diversity and bias towards the reference policy.

Our experimental results demonstrate that LLMs trained with SPL outperform those trained with standard methods in several key areas. From a capabilities standpoint, SPL models produce outputs with greater semantic and lexical variety and achieve higher accuracy on challenging tasks that benefit from diverse sampling strategies. From an alignment perspective, these models can proportionately represent a wider range of societal viewpoints and exhibit improved logit calibration. Importantly, SPL achieves enhanced diversity with less quality degradation, offering a Pareto improvement over standard temperature scaling.

The introduction of SPL highlights the importance of diversity in the development and alignment of LLMs. SPL provides the most significant benefits in scenarios that require inference-time scaling or representation of diverse perspectives.
By providing a mechanism to align LLMs without causing mode collapse, SPL contributes to the creation of more capable, reliable, and representative language models. In future work, it would be interesting to explore alternative, semantically grounded diversity metrics that could be integrated into preference learning algorithms — such as metrics constructed from embeddings or LLM judges.

\section*{Acknowledgements}
This work was supported by an AI2050 Early Career Fellowship from Schmidt Sciences (Grant G-22-64505) and NSF Graduate Research Fellowship (Grant DGE 2141064).

\bibliography{main}
\bibliographystyle{main}

\appendix
\newpage

{\Large \textsc{Appendix}}

\section{Theoretical Results}\label{appendix:theory}

\noindent \textbf{Proposition \ref{prop:rlhf}} (Two-Outcome RLHF Policy). \emph{
Suppose a population of raters prefers completion $y \succ y'$ with probability $p$. Then RLHF (or DPO) with KL-regularization penalty $\beta$ has the optimal policy
    \begin{equation}
        \pi(y) \propto \pi_{ref}(y) p^{1 / \beta}.
    \end{equation}
}
\begin{proof}
    From \citet{rafailov2024directpreferenceoptimizationlanguage}, RLHF and DPO share the same optimal policy. So, it suffices to analyze the RLHF objective. First, consider the Bradley-Terry reward modeling objective
    \begin{align}
        \max_r p \log \sigmoid(r(y) - r(y')) + (1-p) \log \sigmoid(r(y')-r(y))
    \end{align}
    Because log-loss is a proper scoring rule, we know that the optimal \( r^* \) will result in distribution-matching 
    \[
    p = \sigma(r^*(y) - r^*(y')) = \frac{\exp(r^*(y))}{\exp(r^*(y)) + \exp(r^*(y'))}.
    \]

    Following \citet{rafailov2024directpreferenceoptimizationlanguage}, the solution to the KL-regularized RL problem gives us optimal policy
    \begin{align}
        \pi(y) &= \pi_{ref}(y) \exp(r^*(y))^{1/\beta} / Z
    \end{align}
    for a normalizing constant $Z$. We may rewrite this as
    \begin{align}
        \pi(y) &= \pi_{ref}(y) (\frac{\exp(r^*(y))}{\exp(r^*(y)) + \exp(r^*(y'))})^{1/\beta} / Z' \\
        &= \pi_{ref}(y) p^{1 / \beta} / Z'.
    \end{align}
    for a new normalizing constant $Z'$.
\end{proof}

\noindent \textbf{Proposition \ref{prop:SPL}} (Two-Outcome SPL Policy). \emph{
Suppose a population of raters prefers completion $y \succ y'$ with probability $p(y \succ y')$. Then SPL with entropy bonus $\alpha$ and cross-entropy penalty $\beta$ has the optimal policy
    \begin{equation}
        \pi(y) \propto \pi_{ref}(y)^{\beta / \alpha} p^{1 / \alpha}.
    \end{equation}
}
\begin{proof}
    From \citet{rafailov2024directpreferenceoptimizationlanguage}, RLHF and DPO share the same optimal policy. So it suffices to analyze the RLHF objective. First, consider the Bradley-Terry reward modeling objective
    \begin{align}
        \max_r p \log \sigmoid(r(y) - r(y')) + (1-p) \log \sigmoid(r(y')-r(y))
    \end{align}
    Because log-loss is a proper scoring rule, we know that the optimal $r^*$ will result in distribution-matching $p = \sigmoid(r(y) - r(y')) = \exp(r(y)) / (\exp(r(y)) + \exp(r(y')))$.
    
    Following Proposition \ref{prop:dpo_derivation}, we obtain optimal policy
    \begin{align}
        \pi(y) &= \pi_{ref}(y)^{\beta / \alpha} \exp(r(y))^{1/\alpha} / Z
    \end{align}
    for a normalizing constant $Z$. We may rewrite this as
    \begin{align}
        \pi(y) &= \pi_{ref}(y)^{\beta / \alpha} (\frac{\exp(r(y))}{\exp(r(y)) + \exp(r(y'))})^{1/\alpha} / Z' \\
        &= \pi_{ref}(y)^{\beta / \alpha} p^{1 / \alpha} / Z'.
    \end{align}
    for a new normalizing constant $Z'$.
\end{proof}

\vspace{\baselineskip}

\begin{proposition}[Multi-Outcome SPL Policy] \label{appendix:prop_multi_outcome}
Suppose a population of raters have preferences over completions that can be modeled by Plackett-Luce (i.e. there exists an $r$ such that $\forall y_1, ..., y_N$ we have $p(y_1 \succ y_2, ..., y_N) = \frac{\exp(r(y))}{\sum_{i=1}^N \exp(r(y))}$). Then SPL with entropy bonus $\alpha$ and cross-entropy penalty $\beta$ has the optimal policy
\begin{align}
    \pi(y) \propto \pi_{ref}(y)^{\beta / \alpha} p_{best}(y)^{1/\alpha}
\end{align}
where $p_{best}(y) = p(y \succ_{y' \neq y} y')$ is the proportion of people for whom $y$ is the most preferred completion.
\end{proposition}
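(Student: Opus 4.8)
The plan is to follow the template of the two-outcome argument (Proposition~\ref{prop:SPL}) and replace the Bradley--Terry model with its multi-outcome generalization, Plackett--Luce. There are two essentially independent ingredients: (i) a variational computation that produces the functional form of the optimal SPL policy in terms of a reward $r^*$, and (ii) an identification argument that rewrites $\exp(r^*(y))$ in terms of the top-choice probabilities $p_{best}(y)$. Neither ingredient uses anything specific to the two-outcome case, so the generalization is mostly bookkeeping.

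First I would establish the general form of the optimal policy, which is exactly Proposition~\ref{prop:dpo_derivation}. Maximizing $\E_{y \sim \pi}[r(y)] + \alpha H(\pi) - \beta H(\pi, \pi_{ref})$ over the probability simplex, one forms the Lagrangian with a multiplier $\lambda$ enforcing $\sum_y \pi(y) = 1$ and sets the coordinatewise derivative to zero. The stationarity condition reads $r(y) - \alpha(\log \pi(y) + 1) + \beta \log \pi_{ref}(y) - \lambda = 0$, and solving for $\pi(y)$ gives
\begin{equation}
    \pi(y) \propto \pi_{ref}(y)^{\beta/\alpha}\exp(r^*(y))^{1/\alpha},
\end{equation}
exactly as in the two-outcome proof; strict concavity of the entropy term for $\alpha > 0$ guarantees this stationary point is the unique maximizer. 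I would simply cite this result rather than redo the computation.

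Second I would insert the Plackett--Luce structure. By hypothesis the population preferences satisfy $p(y \succ_{y' \neq y} y') = \exp(r(y)) / \sum_{i} \exp(r(y_i))$, so the top-choice probability is precisely the softmax of $r$, i.e. $p_{best}(y) = \softmax(r)(y)$. The reward-modeling step fits $r^*$ by maximizing the Plackett--Luce log-likelihood of the observed top choices; since log-loss is a strictly proper scoring rule, the maximizer matches the induced distribution, giving $\softmax(r^*) = p_{best}$. Hence $\exp(r^*(y)) = S \cdot p_{best}(y)$ where $S = \sum_i \exp(r^*(y_i))$ does not depend on $y$. Substituting into the policy form and absorbing the constant factor $S^{1/\alpha}$ into the normalizer yields $\pi(y) \propto \pi_{ref}(y)^{\beta/\alpha} p_{best}(y)^{1/\alpha}$, as claimed.

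The main obstacle is the identification step rather than the optimization, which is routine. The clean fact that makes everything work is that the Plackett--Luce rank-$1$ probability is exactly a softmax, so the only content to verify is that fitting the reward by (proper) maximum likelihood recovers that softmax as $p_{best}$. Some care is needed because $r^*$ is identified only up to an additive constant; fortunately both the softmax in the identification step and the exponentiated reward in the policy are invariant to such shifts after renormalization, so this ambiguity is harmless. I would also make explicit that $p_{best}$ refers to top-choice (rank-$1$) probabilities, since the same reward $r$ governs full rankings as well, and it is only this top-choice marginal that enters the final expression.
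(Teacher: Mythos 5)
Your proof is correct and takes essentially the same route as the paper: cite the Lagrangian-derived policy form $\pi(y) \propto \pi_{ref}(y)^{\beta/\alpha}\exp(r^*(y))^{1/\alpha}$ from Proposition~\ref{prop:dpo_derivation}, then use the proper-scoring-rule (distribution-matching) property of the fitted reward to identify $\exp(r^*(y))$, up to a $y$-independent constant, with $p_{best}(y)$ under the Plackett--Luce assumption, absorbing that constant into the normalizer. The only cosmetic difference is that the paper phrases the distribution-matching step in terms of pairwise Bradley--Terry comparisons while you phrase it via the top-choice likelihood; under the Plackett--Luce hypothesis both identify the same reward up to an additive shift, which, as you correctly note, is harmless after renormalization.
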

\begin{proof}
    With more than two completions, it is possible to form preference distributions that cannot be perfectly fit by a Bradley-Terry model (e.g. intransitive preferences). So, we first assume that the preference distribution can be represented by a Bradley-Terry model, which ensures that the reward learning step results in distribution-matching $p(y \succ y) = \frac{\exp(r(y))}{\exp(r(y)) + \exp(r(y'))}$.

    Following Proposition \ref{prop:dpo_derivation}, we have optimal policy $\pi(y) = \pi_{ref}(y)^{\beta / \alpha} \exp(r(y))^{1 / \alpha} / Z$. We may rewrite this as
    \begin{align}
        \pi(y) &= \pi_{ref}(y)^{\beta / \alpha} (\frac{\exp(r(y))}{\sum_{y'} \exp(r(y'))})^{1 / \alpha} / Z' \\
        &= \pi_{ref}(y)^{\beta / \alpha} p_{best}(y)^{1 / \alpha} / Z'
    \end{align}
    for a new normalizing constant Z.
\end{proof}

\vspace{\baselineskip}

\begin{proposition}[SPL DPO Derivation] \label{prop:dpo_derivation}
    The following objective
    \begin{align}
        \max_\pi \E_{y \succ y' \sim \mathcal{D}}[\log \sigmoid(\alpha \log \frac{\pi(y|x)}{\pi(y'|x)} - \beta \log \frac{\pi_{ref}(y|x)}{\pi_{ref}(y'|x)})]
    \end{align}
    shares the same optimal policy as the SPL RLHF objective
    \begin{align}
        \max_\pi \E_{y \sim \pi(y|x)}[r(x, y)] + \alpha H(\pi(\cdot | x)) - \beta H(\pi_{ref}(\cdot | x), \pi(\cdot | x)).
    \end{align}\label{eqn:appendix_SPL_rlhf}
\end{proposition}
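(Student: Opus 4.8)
The plan is to mirror the DPO derivation of \citet{rafailov2024directpreferenceoptimizationlanguage}, adapted to the decoupled entropy/cross-entropy objective. The argument proceeds in three movements: (i) solve the SPL RLHF program in closed form, (ii) invert the closed form to reparametrize the reward in terms of the policy, and (iii) substitute this reparametrization into the Bradley–Terry likelihood and observe that the resulting objective is exactly the SPL DPO objective, so that the two share a maximizer.

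First I would write the per-prompt SPL RLHF objective explicitly as a sum over completions,
\begin{align}
    J(\pi) = \sum_y \pi(y|x)\, r(x,y) - \alpha \sum_y \pi(y|x)\log\pi(y|x) + \beta \sum_y \pi(y|x)\log\pi_{ref}(y|x),
\end{align}
where the entropy bonus supplies the $-\alpha\sum_y \pi\log\pi$ term and the cross-entropy penalty supplies $\beta\,\E_{y\sim\pi}[\log\pi_{ref}]$ (i.e. the average log-likelihood of $\pi$-generations under $\pi_{ref}$). Dividing by $\alpha$ and completing the logarithm, $J(\pi)/\alpha$ can be rewritten as $\log Z(x) - \KL\big(\pi(\cdot|x)\,\|\,q(\cdot|x)\big)$, where $q(y|x) \propto \pi_{ref}(y|x)^{\beta/\alpha}\exp\!\big(r(x,y)/\alpha\big)$ and $Z(x)$ is its normalizer. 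Since $Z(x)$ is independent of $\pi$ and $\KL \ge 0$ with equality iff its arguments coincide, the unique maximizer subject to $\sum_y \pi(y|x) = 1$ is
\begin{align}
    \pi^*(y|x) = \frac{1}{Z(x)}\,\pi_{ref}(y|x)^{\beta/\alpha}\exp\!\Big(\tfrac{1}{\alpha} r(x,y)\Big).
\end{align}
The same expression follows from a Lagrange-multiplier stationarity computation, which I would include as a direct alternative derivation.

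Second, I would invert this identity to express the reward as $r(x,y) = \alpha\log\pi^*(y|x) - \beta\log\pi_{ref}(y|x) + \alpha\log Z(x)$. The crucial observation is that the Bradley–Terry likelihood depends on the reward only through differences $r(x,y) - r(x,y')$, in which the prompt-dependent term $\alpha\log Z(x)$ cancels exactly, leaving
\begin{align}
    r(x,y) - r(x,y') = \alpha\log\frac{\pi^*(y|x)}{\pi^*(y'|x)} - \beta\log\frac{\pi_{ref}(y|x)}{\pi_{ref}(y'|x)}.
\end{align}
Substituting into $p(y\succ y'|x) = \sigmoid\big(r(x,y) - r(x,y')\big)$ and plugging this into the Bradley–Terry maximum-likelihood objective turns the reward-learning problem into maximization of precisely the SPL DPO objective over $\pi$.

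The main obstacle is the final logical step: concluding that the DPO maximizer \emph{coincides} with the RLHF optimal policy, rather than merely satisfying the same first-order conditions. Following \citet{rafailov2024directpreferenceoptimizationlanguage}, I would argue that the map $r \mapsto \pi_r$ defined by the Gibbs form above is a bijection between reward functions (modulo the equivalence class of rewards differing by a prompt-only shift $g(x)$, under which Bradley–Terry is invariant) and strictly positive policies: given any such $\pi$ and strictly positive $\pi_{ref}$, the inversion above exhibits a reward realizing it, with the freedom in $Z(x)$ matching exactly the shift degree of freedom the likelihood cannot observe. Hence reparametrizing the likelihood in terms of $\pi$ loses no generality, and maximizing the reparametrized objective over policies recovers the same optimum as first learning the reward and then solving the RL program — establishing that the two objectives share the optimal policy.
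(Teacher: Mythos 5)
Your proposal is correct and follows essentially the same route as the paper's proof: solve the SPL RLHF program in closed form to obtain $\pi^*(y|x) \propto \pi_{ref}(y|x)^{\beta/\alpha}\exp\!\big(r(x,y)/\alpha\big)$ (the paper does this via the Lagrangian you offer as an alternative, while your $\log Z(x) - \KL\big(\pi\,\|\,q\big)$ rewriting additionally gives uniqueness), invert to reparametrize the reward, and substitute the difference $r(x,y)-r(x,y')$ into the Bradley-Terry likelihood. If anything, your write-up is more careful than the paper's: you keep the correct sign on the cross-entropy term (the paper's vectorized objective writes $-\beta\boldsymbol{\pi}^\top\log\boldsymbol{\pi_{ref}}$ where $+\beta$ is needed to produce the stated $\pi_{ref}^{\beta/\alpha}$ factor), and you make explicit the rewards-to-policies bijection modulo prompt-only shifts, which the paper compresses into a citation of the DPO ``substitution trick.''
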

\begin{proof}
    For the RLHF objective, we begin by training a reward model using Bradley-Terry to find
    \begin{align}
        r^*(x,y) = \arg\max_r \E_{y \succ y' | x \sim \mathcal{D}}[\log \sigma(r(x, y) - r(x, y'))]
    \end{align}

    Next, we find the optimal policy for the SPL RLHF objective. Let us represent policies $\pi(\cdot | x)$ and $\pi_{ref}(\cdot | x)$ and reward function $r(x, \cdot)$ as vectors $\boldsymbol{\pi}, \boldsymbol{\pi_{ref}}, \boldsymbol{r}$. We rewrite the SPL RLHF objective as
    \begin{align}
        \max_{\boldsymbol{\pi}} \: & \boldsymbol{\pi}^\top \boldsymbol{r} - \alpha \boldsymbol{\pi}^\top \log \boldsymbol{\pi} - \beta \boldsymbol{\pi}^\top \log \boldsymbol{\pi_{ref}} \\
        &= \boldsymbol{\pi}^\top (\boldsymbol{r} - \beta \log \boldsymbol{\pi_{ref}} - \alpha \log \boldsymbol{\pi_{ref}})
    \end{align}

    We solve this constrained optimization problem using the Lagrangian:
\begin{align}
    \mathcal{L}(\boldsymbol{\pi}, \lambda) = \boldsymbol{\pi}^\top \boldsymbol{r} - \alpha \boldsymbol{\pi}^\top \log \boldsymbol{\pi} - \beta \boldsymbol{\pi}^\top \log \boldsymbol{\pi_{ref}} + \lambda \left( \sum_i \pi_i - 1 \right).
\end{align}
Taking the derivative with respect to \(\boldsymbol{\pi}\) and setting it to zero, we obtain:
\begin{align}
    0 = \nabla_{\boldsymbol{\pi}} L(\boldsymbol{\pi}, \lambda) = \boldsymbol{r} - \alpha (\mathbf{1} + \log \boldsymbol{\pi}) - \beta \log \boldsymbol{\pi_{ref}} + \lambda \mathbf{1}.
\end{align}

Now, solving for \(\boldsymbol{\pi}\), we have:
\begin{align}
    \alpha (\mathbf{1} + \log \boldsymbol{\pi}) &= \boldsymbol{r} - \beta \log \boldsymbol{\pi_{ref}} + \lambda \mathbf{1} \\
    \mathbf{1} + \log \boldsymbol{\pi} &= \frac{\boldsymbol{r} - \beta \log \boldsymbol{\pi_{ref}} + \lambda \mathbf{1}}{\alpha} \\
    \log \boldsymbol{\pi} &= \frac{\boldsymbol{r} - \beta \log \boldsymbol{\pi_{ref}} + (\lambda - \alpha) \mathbf{1}}{\alpha} \\
    \boldsymbol{\pi} &= \exp \left( \frac{\boldsymbol{r} - \beta \log \boldsymbol{\pi_{ref}} + (\lambda - \alpha) \mathbf{1}}{\alpha} \right)
\end{align}
which implies the optimal SPL RLHF policy for all $x$
\begin{align}
    \pi^*(y|x) &= \exp(\frac{1}{\alpha}r^*(x,y)) \pi_{ref}(y|x)^{\beta / \alpha} / Z(x)
\end{align}
where $Z(x) = \exp(\alpha-\lambda)$ is a normalizing constant such that the probabilities over outputs sum to one.

Finally, we analyze the optimal policy for the DPO-style objective. Applying the substitution trick from \citet{rafailov2024directpreferenceoptimizationlanguage}, we can reparameterize reward functions in terms of their optimal policies according to Equation \ref{eqn:appendix_SPL_rlhf}.
\begin{align}
    \pi(y|x) &= \exp(\frac{1}{\alpha}{r(x,y)}) \pi_{ref}(y|x)^{\beta / \alpha} / Z(x) \\
    \log \pi(y|x) &= \frac{1}{\alpha} r(x,y) + \frac{\beta}{\alpha} \log \pi_{ref}(y|x) - \log Z(x) \\
    \alpha \log \pi(y|x) &= r(x,y) + \beta \log \pi_{ref}(y|x) - \alpha \log Z(x) \\
    r(x,y) &= \alpha \log \pi(y|x) - \beta \log \pi_{ref}(y|x) + \alpha \log Z(x)
\end{align}
Observe that
\begin{align}
    r(x,y) - r(x,y') &= \alpha \log \pi(y|x) - \beta \log \pi_{ref}(y|x) + \alpha \log Z(x) \\
    &\quad- [\alpha \log \pi(y'|x) - \beta \log \pi_{ref}(y'|x) + \alpha \log Z(x)] \\
    &= \alpha \log \frac{\pi(y|x)}{\pi(y'|x)} - \beta \log \frac{\pi(y|x)}{\pi(y'|x)}
\end{align}
since the normalizing constants cancel.

We now substitute our reward function reparameterization into the Bradley-Terry objective to get the SPL DPO objective
\begin{align}
    \pi^*(y|x) &=\max_\pi \: \E_{y \succ y' | x \sim \mathcal{D}}[\log \sigmoid(\alpha \log \frac{\pi(y|x)}{\pi(y'|x)} - \beta \log \frac{\pi_{ref}(y|x)}{\pi_{ref}(y'|x)})]
\end{align}
which satisfies the relationship
$\pi^*(y|x) = \exp(\frac{1}{\alpha}r^*(x,y)) \pi_{ref}(y|x)^{\beta / \alpha} / Z(x)$.
\end{proof}

\newpage
\section{Diversity-Quality Tradeoffs Experiment}\label{appendix:dq}

\subsection{Additional Diversity-Quality Results and Details on Diversity Metrics}
\begin{figure}[H]
    \centering
    \includegraphics[width=0.9\linewidth]{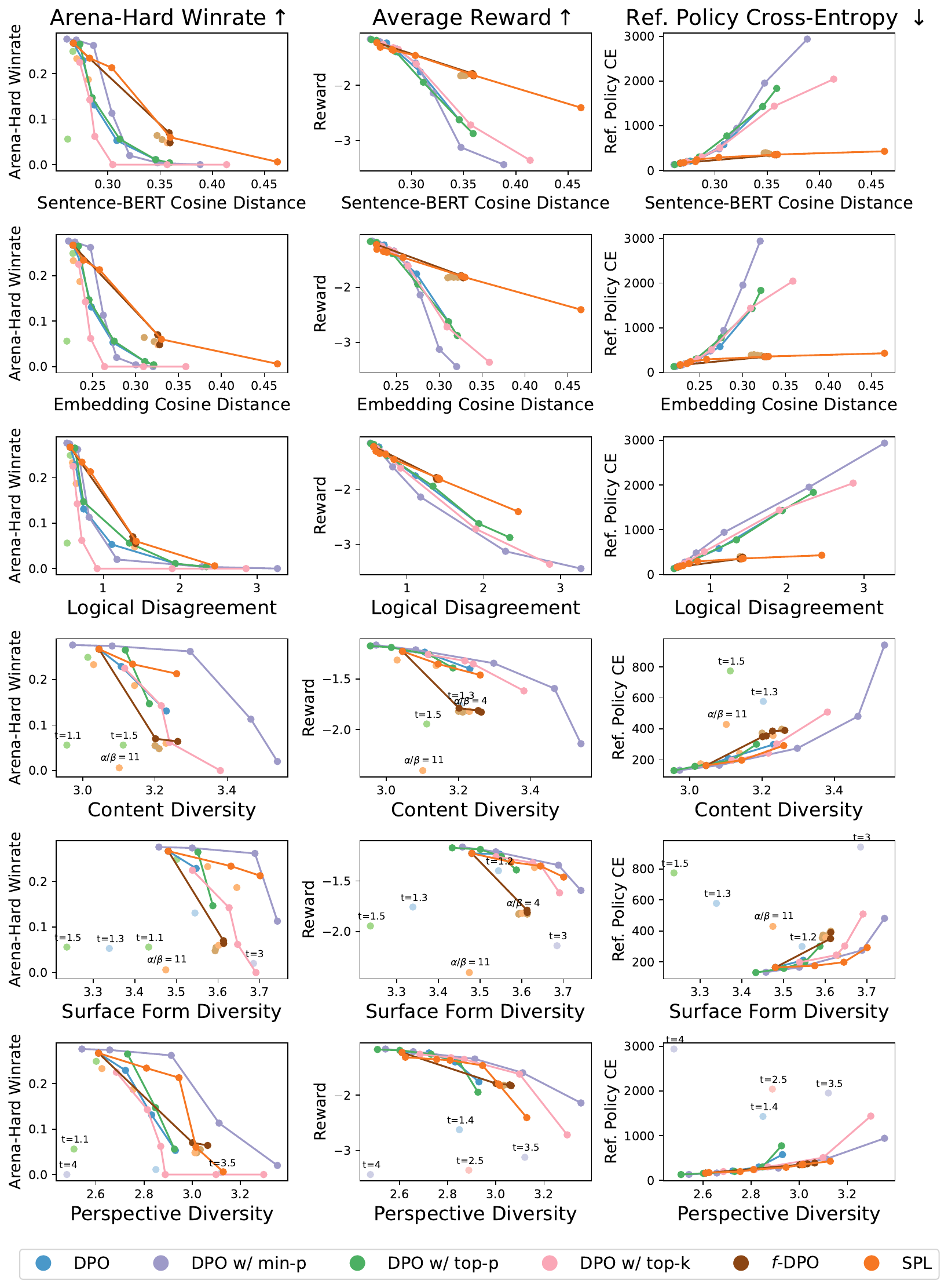}
    \caption{\textbf{Diversity-quality tradeoffs with additional diversity metrics.} Here, Sentence-BERT is an additional expected cosine distance metric in addition to ``Embedding Cosine Distance" which uses the OpenAI Embeddings API. The remaining four diversity metrics are evaluated with an LLM judge on pools of responses. For SPL, we perform a global temperature sweep in range $t \in [1, 11]$. For all other methods, we sweep until outputs degenerate into unintelligible text. For some diversity metrics, such as content, surface form, and perspective diversity, low-quality generations are often rated as less diverse (see examples below). We label points off the Pareto frontier to help identify these cases.}
    \label{fig:diversity_quality_all}
\end{figure}

\textbf{Cosine distance diversity metrics.}
In Figure \ref{fig:diversity_quality} and Figure \ref{fig:diversity_quality_all}, the Sentence-BERT Cosine Distance and Embedding Cosine Distance measure the expected cosine distance between response embedding pairs according to two different embedding models: 1) Sentence-BERT-Large \citep{reimers2019sentencebertsentenceembeddingsusing} and OpenAI's ``text-embedding-3-small" model. For each of 500 held-out prompts from HH-RLHF, we sample $N=16$ inputs and calculate the mean cosine distance between all pairs. Concretely, this is expressed in the formula
\begin{align}
    \E_{y_1, ..., y_{N} \sim \pi(y|x), x \sim D} \left[ \frac{1}{N^2} \sum_{i, j} 1 - \frac{\phi(y_i)^\top \phi(y_j)}{||\phi(y_i)|| \cdot ||\phi(y_j)||} \right]
\end{align}
where $\phi(\cdot)$ is the embedding map acting on response $y$.

\textbf{LLM-based diversity metrics.} For the remaining diversity metrics, which we split the 16 responses into smaller pools and use ``gpt-4o-mini-mini-2024-07-18" as a judge for response diversity. Here we provide the list of diversity evaluation prompts used for metrics in Figures \ref{fig:diversity_quality} and \ref{fig:diversity_quality_all}.

\begin{tcolorbox}[colback=white, colframe=black, arc=4mm, boxrule=0.5mm, title=, fonttitle=\bfseries, title=Logical Agreement Evaluation Prompt, ]
You will be presented with 2 responses to the same prompt. Your task is to analyze their logical agreement on a scale of 1-5, where 1 means completely divergent approaches or ideas and 5 means nearly identical logical frameworks or conclusions.

\medskip
Prompt: \{input\}

\medskip
Responses:
\{response\_list\}

\medskip
Please provide a numerical rating (1-5) of their logical agreement. Output your response as Rating: X, where X is your rating. Afterwards, on the next line, please provide a brief explanation of your rating.
\end{tcolorbox}
For the logical agreement prompt, we then subtract the resulting score from 5 to turn it into a logical disagreement metric.

\begin{tcolorbox}[colback=white, colframe=black, arc=4mm, boxrule=0.5mm, title=, fonttitle=\bfseries, title=Content Diversity Evaluation Prompt, ]
You will be presented with 4 responses to the same prompt. Your task is to analyze their content diversity on a scale of 1-5, where 1 means nearly identical content or conclusions and 5 means completely divergent content or ideas.

\medskip
Prompt: \{input\}

\medskip
Responses:
\{response\_list\}

\medskip
How diverse are the contents of the proposed responses? Please provide a numerical rating (1-5). Output your response as Rating: X, where X is your rating. Afterwards, on the next line, please provide a brief explanation of your rating.
\end{tcolorbox}

\begin{tcolorbox}[colback=white, colframe=black, arc=4mm, boxrule=0.5mm, title=, fonttitle=\bfseries, title=Surface Form Diversity Prompt, ]
You will be presented with 4 responses to the same prompt. Your task is to analyze their surface form diversity on a scale of 1-5, where 1 means nearly identical textual organization or structure, and 5 means completely varied textual arrangements or styles.

\medskip
Prompt: \{input\}

\medskip
Responses:
\{response\_list\}

\medskip
How diverse are the surface forms of the proposed responses? Please provide a numerical rating (1-5). Output your response as Rating: X, where X is your rating. Afterwards, on the next line, please provide a brief explanation of your rating.
\end{tcolorbox}

\begin{tcolorbox}[colback=white, colframe=black, arc=4mm, boxrule=0.5mm, title=, fonttitle=\bfseries, title=Perspective Diversity Prompt, ]
You will be presented with 4 responses to the same prompt. Your task is to analyze their perspective diversity on a scale of 1-5, where 1 means nearly identical viewpoints or approaches and 5 means completely varied perspectives or approaches to the topic.

\medskip
Prompt: \{input\}

\medskip
Responses:
\{response\_list\}

\medskip
How diverse are the perspectives of the proposed responses? Please provide a numerical rating (1-5). Output your response as Rating: X, where X is your rating. Afterwards, on the next line, please provide a brief explanation of your rating.
\end{tcolorbox}

\subsection{Example Generations}

\subsection{Training Setup}\label{appendix:dq_training}
\textbf{DPO and SPL Finetuning and Inference.}
For both DPO and SPL, we LoRA-finetune Mistral-7B-Instruct-v0.2 on HH-RLHF for 5,000 steps with batch size 8. We use LoRA rank $r_{LoRA}=16$, regularization $\alpha_{LoRA} = 16$, and dropout $p_{LoRA} = 0.05$. We use learning rate $1e-5$, 150 warmup steps, and max conversation length of 512 tokens.

For all runs, we use regularization parameter $\beta = 0.1$. For DPO with token-level temperature scaling, we sample with temperatures $t =$ 1, 1.1, 1.2, 1.3, 1.4, and 1.5. When combined with min-p sampling, we choose $p_{base}=0.1$ and temperatures $t=$ 1.3, 1.5, 2, 2.5, 3, 3.5, and 4. When combined with top-p sampling, we choose $p=0.9$ and temperatures $t=$ 1.1, 1.2, 1.3, 1.4, 1.5, 1.6, and 1.7. When combined with top-k sampling, we choose $k=180$ and temperatures $t=$ 1.1, 1.2, 1.3, 1.5, 2, and 2.5. We stop at these temperatures because beyond this, responses are nearly always incoherent strings of tokens. Lastly, for f-DPO \citep{wang2023reverseklgeneralizingdirect}, we consider the family of $\alpha$-divergences including $\alpha=$ 0 (reverse KL), 0.001, 0.1, 0.3, 0.5, 0.9, and 1 (forward KL). Here, small $\alpha$ correspond to mode-seeking (diversity-destroying) divergences while large $\alpha$ correspond to mass-covering (diversity-preserving) divergences. We stop at $\alpha = 1$ because, beyond this, responses are incoherent strings of text. For SPL, we train with global temperatures $\alpha / \beta =$ 1, 1.1, 1.2, 1.3, 1.5, 2, 4, and 11.

\textbf{Reward Model Training and Inference.}
As one of our quality metrics, we measure the average reward obtained by model completions against a separately trained reward function. To obtain this reward model, we LoRA-finetune Mistral-7B-Instruct-v0.2 with a reward head using the Bradley-Terry loss on the HH-RLHF dataset.

At inference time, we sample 500 prompts from a held-out validation split of HH-RLHF that neither the language models nor the reward model were trained on. We use this to calculate the average reward achieved by each model's completions.

\newpage
\section{Best-of-N Problem-Solving Experiment}\label{appendix:sec_bon}
\subsection{Relationship Between Problem Difficulty and Optimal Temperature}

In this section, we study the following questions
\begin{enumerate}
    \item Why do harder problems generally benefit from higher temperatures? And why does this benefit disappear once the temperature becomes too large?
    \item Why are high temperatures especially helpful in high best-of-N sample settings?
\end{enumerate}

\begin{figure}[h!]
    \centering
    \includegraphics[width=0.45\linewidth]{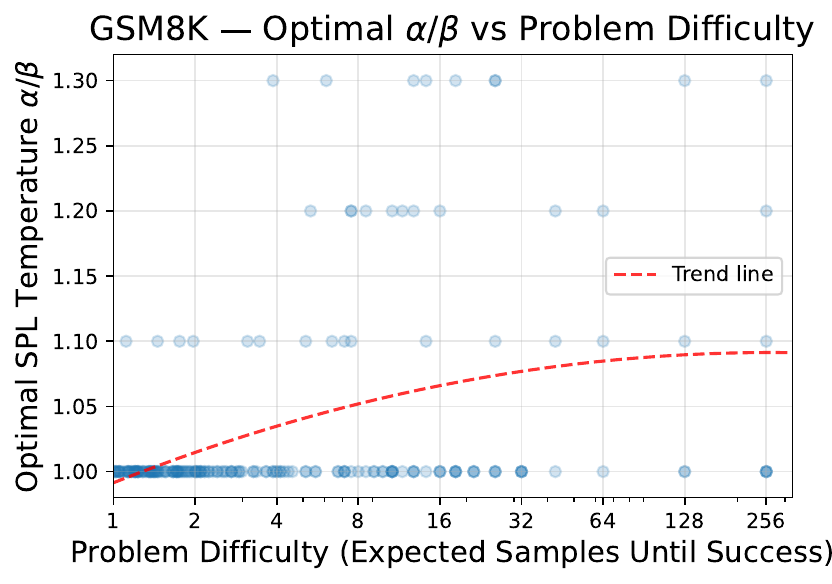}
    \includegraphics[width=0.45\linewidth]{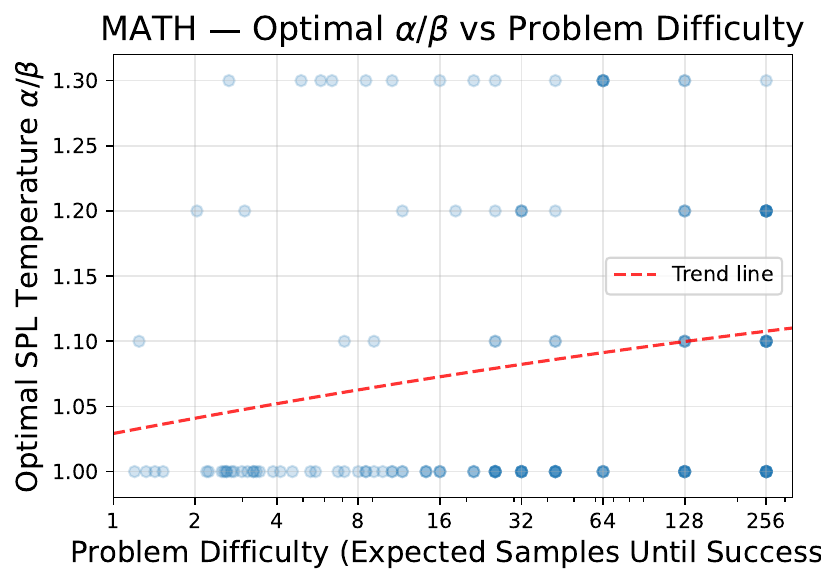}
    \caption{\textbf{Optimal SPL temperature as a function of problem difficulty.} On the x-axis, we measure problem difficulty as the expected number of samples needed for Mistral-Instruct-7B to achieve a correct answer. On the y-axis, we indicate which SPL temperature led to the best success rate on that problem. We then fit a polynomial trendline to the data, which finds a positive relationship between problem difficulty and higher optimal temperatures. Dots are plotted with transparency, meaning darker dots correspond to multiple problems.}
    \label{fig:optimal_temps}
\end{figure}

\textbf{Why harder problems benefit from higher temperatures, up to a point.} We first consider the role of temperature in the single-sample (not best-of-N setting). As displayed in Figure \ref{fig:optimal_temps}, every problem has an optimal temperature that maximizes the probability of a successful solution. Increasing temperature flattens the distribution while decreasing it sharpens the distribution. Difficult problems are those where the trained model has a low probability of generating a correct solution.

During inference-time scaling procedures, our goal is to construct a sampling distribution that minimizes the number of samples needed to obtain a correct solution. For best-of-N, we receive feedback only upon generation completion, where samples must pass a final scoring function. This is akin to rejection sampling in statistics, where proposal distributions are chosen to minimize the number of samples required to approximate a target distribution.

In the rejection sampling literature, when little is known about the target distribution, it is well-known that flatter or high-entropy proposal distributions are best \citep{robert1999monte}. Intuitively, highly concentrated proposal distributions that narrowly miss the target will achieve extremely low success rates. In contrast, flatter proposal distributions whose modes narrowly miss the target will still achieve reasonable success rates. The optimal proposal distribution scales in flatness inversely with one's confidence in its overlap with the target distribution \citep{geweke1989bayesian}. However, an overly flat distribution is also inefficient, underestimating the true overlap with the target distribution.

This intuition is confirmed in Figure \ref{fig:optimal_temps}, where low-success problems are benefitted by SPL temperatures higher than 1, which produce flatter sampling distributions. However, when temperatures become too large, performance again degrades. Additional evidence can be found in the y-intercepts of Figure \ref{fig:bon_accuracy} and best-of-1 performance of Tables \ref{table:hard_bon_accuracy}- \ref{table:easy_bon_accuracy}. On easy and medium problems, DPO has a substantially higher single-sample success rate than temperature-scaled methods. However, on hard problems, this gap shrinks. Table \ref{table:hard_bon_accuracy} shows that on GSM8K and MATH hard splits, SPL is within 0.2\% and 0.08\% of SPL's best-of-1 success rate. This supports the finding that harder problems have higher optimal temperatures.

\begin{figure}[h!]
    \centering
    \includegraphics[width=0.4\linewidth]{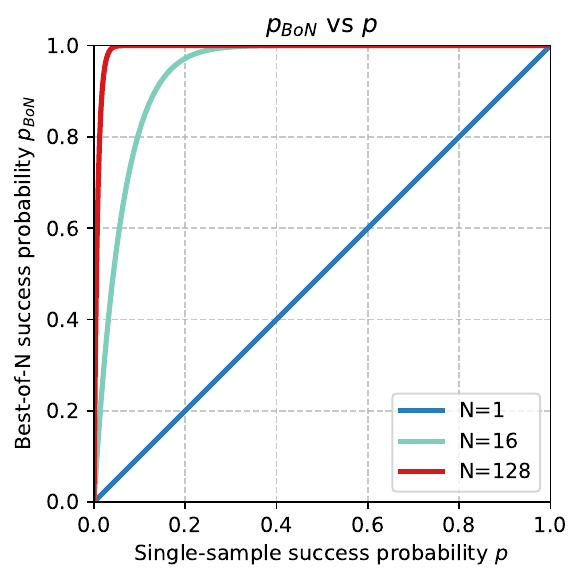}
    \caption{\textbf{Best-of-N success rate vs single-sample success rate as a function of $N$.} As $N$ increases, the best-of-N success curve approaches a step function. Hard problems with low single-sample success rates benefit substantially from small increases in $p$. Meanwhile, the best-of-N success rate of problems past the ``elbow" is already saturated, meaning that small decreases in $p$ have little effect on $p_{BoN}$. The rapidly decreasing gradient of this curve favors high-variance strategies such as high-temperature sampling.}
    \label{fig:bon_probability}
\end{figure}

\textbf{Why high temperatures are especially helpful in high sample best-of-N settings.} For a given problem $x$, let use define the acceptable set $A$ as the set of completions $y$ that are correct. We can then define the probability of success for an LLM as $p = \pi(A | x) = \sum_{y \in A} \pi(y|x)$.

The best-of-N success rate can then be modeled by the CDF of a geometric distribution
\begin{align}
    p_{BoN} = 1 - (1-p)^N.
\end{align}
Note that $p_{BoN}$ is monotonic in single-sample success rate $p$. This means that for a given problem, the optimal temperature which maximizes single-sample success rate also maximizes best-of-N success rate. Thus the number of samples $N$ does not influence the optimal sampling temperature for a problem.

Why, then, in Figure \ref{fig:bon_accuracy} do we observe high temperatures being differentially helpful at high $N$? While optimal temperatures do not change, the relative gains and losses from increasing temperature become asymmetrical.

Rather than single problem performance, we are interested in the aggregate success rate across a set of problems with different difficulty levels
\begin{align}
    \E_{x, A \sim D}[p_{BoN}] &= \E_{x, A \sim D}[1-(1-\pi(A|x))^N].
\end{align}
As shown in Figure \ref{fig:bon_probability}, hard problems benefit more from small increases in $p$ than easier problems suffer from an equivalent decrease in $p$. In aggregate, this causes the optimal temperature for the set of problems to trend upwards as $N$ increases. Intuitively, at high $N$, we can afford to sacrifice some performance on easy problems (which will still succeed) to boost performance on hard problems (which benefit more from more diversity).

This behavior is a consequence of the concavity of the best-of-N success rate function $f(p) = 1-(1-p)^N$. By Jensen's inequality, increasing the variance of $p$ (e.g., via higher temperatures) can raise the aggregate success rate across problems, even though the single-problem optimal temperature remains unchanged \citep{boyd2004convex}. This results in the ``crossing" behavior seen in Figures \ref{fig:appendix_bon_accuracy_hard} and \ref{fig:appendix_bon_samples_needed}. As $N$ increases, $f(p)$ becomes more aggressively concave, and higher temperatures begin to outperform standard DPO.

\begin{figure}[h!]
    \centering
    \includegraphics[width=\textwidth]{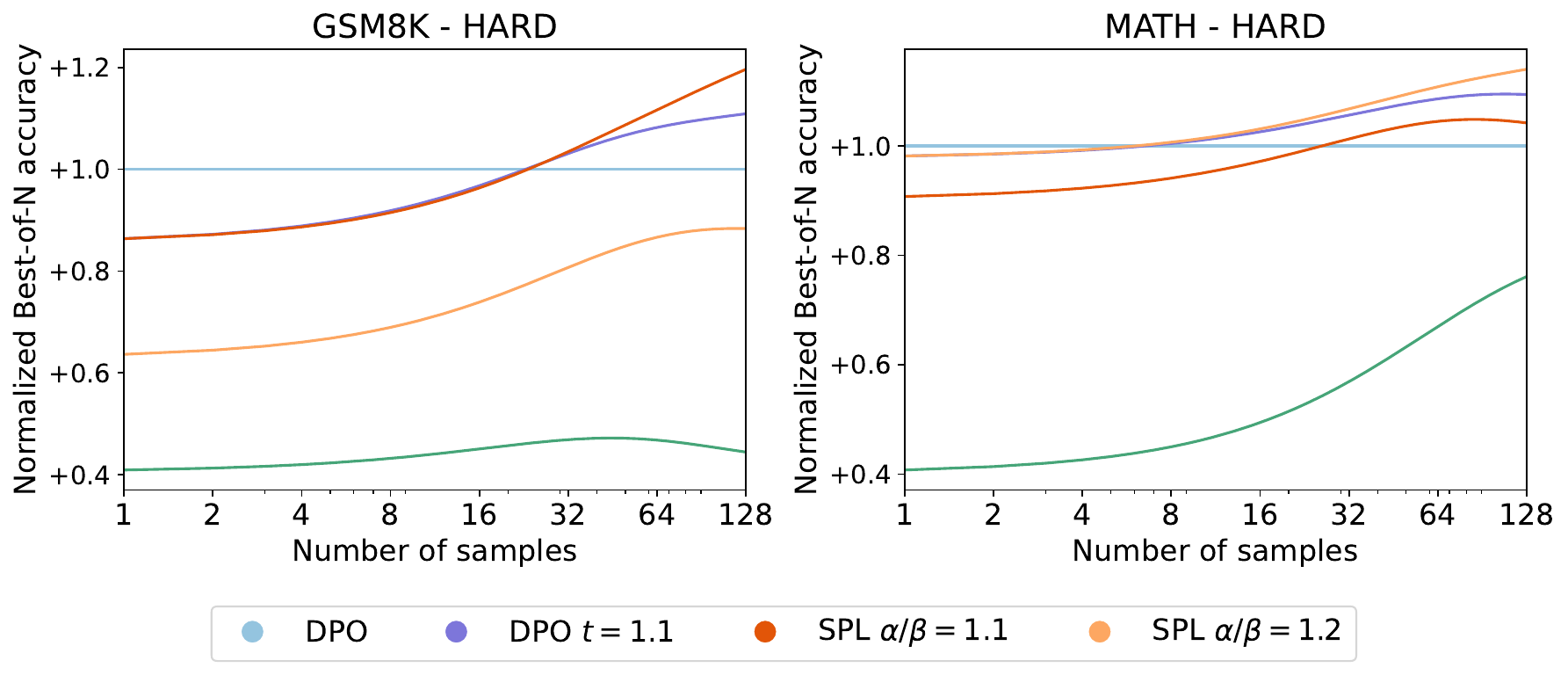}
    \caption{\textbf{Normalized best-of-N accuracy on difficult problem-solving instances.} We show best-of-N accuracy of each method as a fraction of the accuracy achieved by standard DPO. On these problems, higher temperature runs have a lower single-sample accuracy than standard DPO. However, some of them cross and eventually surpass DPO for large $N$. However, with too high of a token-level temperature, quality degradation is so significant that the curve never approaches standard DPO performance.}\label{fig:appendix_bon_accuracy_hard}
\end{figure}

\begin{figure}[h!]
    \centering
    \includegraphics[width=\textwidth]{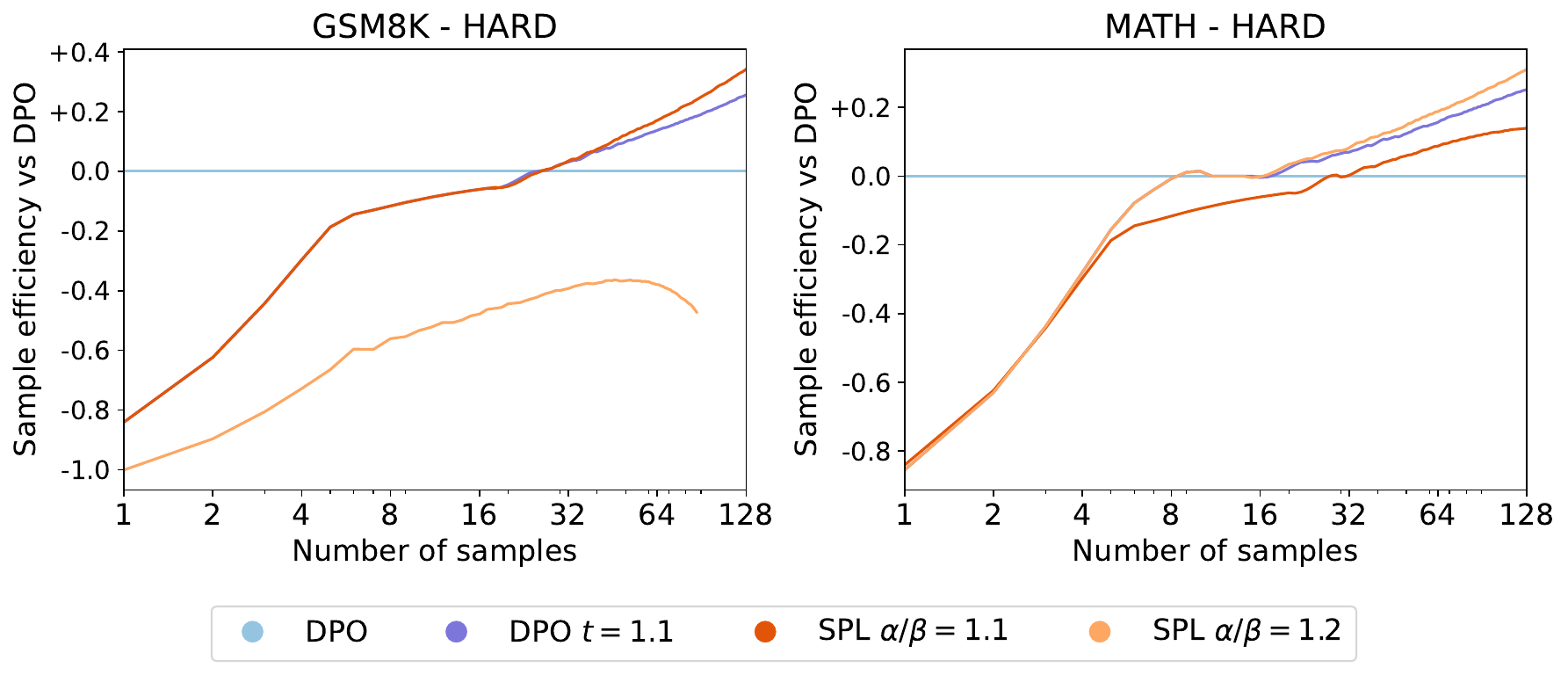}
    \caption{\textbf{Sample efficiency versus DPO.} For each number of standard DPO samples, we show how many samples are required for other methods to achieve the same best-of-N accuracy. At low samples, DPO is the most sample efficient method. However, at higher sampling budgets, SPL achieves the same accuracy as DPO with 25-35\% computational savings. DPO $t=1.2$ was not included in this graph as it lies far below all other methods.}\label{fig:appendix_bon_samples_needed}
\end{figure}

\begin{figure}[h!]
    \centering
    \includegraphics[width=\linewidth]{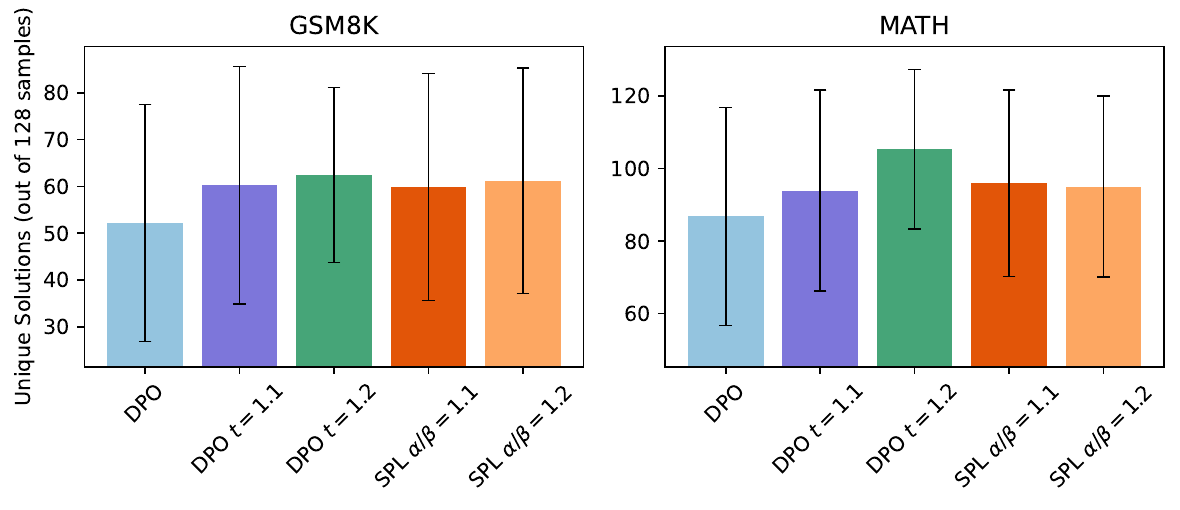}
    \caption{\textbf{Solution diversity and sampling efficiency.} For best-of-N sampling, repetition in final solutions is a major cause of sample inefficiency. At 128 samples, 30-40\% of solutions are redundant and have already been sampled before. While convergent chain-of-thought reasoning can be helpful in settings such as majority voting, it is undesirable when a verifier is present. Both token-level temperature scaling and SPL increase the number of unique solutions sampled. For example, SPL samples 17\% and 10\% more unique solutions than DPO on GSM8K and MATH, respectively. This is equivalent to a 12\% and 22\% reduction in redundant solutions.}\label{fig:appendix_unique_completions}
\end{figure}

\newpage
One intuitive mechanism that drives this diversity-favoring non-convexity is the way that best-of-N treats duplicate solutions. As shown in Figure \ref{fig:appendix_unique_completions}, redundant samples are a major source of sample inefficiency for DPO at large N. In the single sample case, putting high mass on likely completions increases the success probability. However, in the best-of-N setting, it is optimal to never sample the same solution twice, as it does not increase the probability of success. One advantage of high-temperature sampling in the best-of-N setting is simply that it produces fewer redundant samples.

\textbf{Conclusion.} This analysis provides theoretical grounding for our empirical finding that higher SPL temperatures benefit difficult problems in high-sample settings. It also suggests guidelines for practitioners: when compute allows for many samples, appropriately chosen high temperature sampling can be used to tackle the hardest problems in a dataset without significantly impacting overall performance.

\newpage
\subsection{Best-of-N Accuracy Tables}
We also provide results from Figure \ref{fig:bon_accuracy} in tabular form for an easier quantitative performance comparison.

\begin{table}[h!]
\centering
\begin{tabular}{ccccccc}
\toprule
\textbf{Best-of-N} & \textbf{DPO t=1} & \textbf{DPO t=1.1} & \textbf{DPO t=1.2} & \textbf{SPL $\alpha/\beta$=1.1} & \textbf{SPL $\alpha/\beta$=1.2} \\
\midrule
\multicolumn{6}{l}{\textbf{GSM8K}} \\
1   & \textbf{1.43\%}  & 1.24\%  & 0.59\%  & 1.24\%  & 0.91\%  \\
4   & \textbf{5.35\%}  & 4.75\%  & 2.25\%  & 4.74\%  & 3.53\%  \\
16  & \textbf{16.92\%} & 16.37\% & 7.62\%  & 16.30\% & 12.50\% \\
64  & 37.98\% & 41.11\% & 17.77\% & \textbf{42.43\%} & 32.91\% \\
128 & 48.75\% & 54.07\% & 21.66\% & \textbf{58.34\%} & 43.07\% \\
\midrule
\multicolumn{6}{l}{\textbf{MATH}} \\
1   & \textbf{0.86\%}  & 0.85\%  & 0.35\%  & 0.78\%  & 0.85\%  \\
4   & \textbf{3.22\%}  & 3.19\%  & 1.37\%  & 2.97\%  & 3.20\%  \\
16  & 10.18\% & 10.44\% & 5.03\%  & 9.89\%  & \textbf{10.50\%} \\
64  & 22.12\% & 24.02\% & 14.81\% & 23.11\% & \textbf{24.51\%} \\
128 & 27.99\% & 30.62\% & 21.31\% & 29.16\% & \textbf{31.91\%} \\
\bottomrule
\end{tabular}
\caption{\textbf{Best-of-N accuracy on GSM8K and MATH dataset hard splits.} We provide Figure \ref{fig:bon_accuracy} results in tabular form for easier quantitative analysis. 64 samples and above on GSM8K and 16 samples and above on MATH, SPL outperforms standard and temperature-scaled DPO. At 128 samples on GSM8K, SPL achieves a 10\% higher accuracy than standard DPO and a 4\% higher accuracy than temperature-scaled DPO. At 128 samples on MATH, SPL achieves a 4\% higher accuracy than DPO and 1\% higher accuracy than temperature-scaled DPO. We expect these gaps to widen with more compute-efficient inference-time scaling methods than naive best-of-N.}\label{table:hard_bon_accuracy}
\end{table}

\begin{table}[h!]
\centering
\begin{tabular}{ccccccc}
\toprule
\textbf{Best-of-N} & \textbf{DPO t=1} & \textbf{DPO t=1.1} & \textbf{DPO t=1.2} & \textbf{SPL $\alpha/\beta$=1.1} & \textbf{SPL $\alpha/\beta$=1.2} \\
\midrule
\multicolumn{6}{l}{\textbf{GSM8K}} \\
1   & \textbf{8.95\%}  & 6.06\%  & 3.62\%  & 5.79\%  & 5.22\%  \\
4   & \textbf{26.81\%} & 19.70\% & 12.39\% & 19.01\% & 17.39\% \\
16  & \textbf{55.63\%} & 46.67\% & 31.77\% & 46.05\% & 42.45\% \\
64  & \textbf{79.21\%} & 72.74\% & 55.10\% & 73.40\% & 67.33\% \\
128 & \textbf{85.30\%} & 80.17\% & 63.93\% & 81.14\% & 74.40\% \\
\midrule
\multicolumn{6}{l}{\textbf{MATH}} \\
1   & \textbf{5.40\%}  & 3.67\%  & 2.16\%  & 3.43\%  & 3.56\%  \\
4   & \textbf{15.57\%} & 11.23\% & 7.59\%  & 11.09\% & 11.61\% \\
16  & \textbf{36.15\%} & 26.74\% & 20.71\% & 27.62\% & 29.57\% \\
64  & \textbf{61.49\%} & 50.92\% & 39.31\% & 52.11\% & 56.41\% \\
128 & \textbf{69.07\%} & 62.66\% & 46.88\% & 62.01\% & 66.96\% \\
\bottomrule
\end{tabular}
\caption{\textbf{Best-of-N accuracy on GSM8K and MATH dataset medium splits.} Standard DPO achieves the best accuracy in all settings here, although SPL begins to approach DPO at high sample counts, especially on the harder MATH dataset.}\label{table:medium_bon_accuracy}
\end{table}

\begin{table}[h!]
\centering
\begin{tabular}{ccccccc}
\toprule
\textbf{Best-of-N} & \textbf{DPO t=1} & \textbf{DPO t=1.1} & \textbf{DPO t=1.2} & \textbf{SPL $\alpha/\beta$=1.1} & \textbf{SPL $\alpha/\beta$=1.2} \\
\midrule
\multicolumn{6}{l}{\textbf{GSM8K}} \\
1   & \textbf{34.18\%} & 26.40\% & 15.58\% & 24.33\% & 21.86\% \\
4   & \textbf{68.94\%} & 58.70\% & 41.93\% & 56.60\% & 52.70\% \\
16  & \textbf{91.39\%} & 85.67\% & 72.19\% & 84.75\% & 82.04\% \\
64  & \textbf{97.97\%} & 97.00\% & 89.48\% & 96.51\% & 95.62\% \\
128 & \textbf{98.80\%} & 98.51\% & 92.86\% & 98.38\% & 98.22\% \\
\midrule
\multicolumn{6}{l}{\textbf{MATH}} \\
1   & \textbf{22.51\%} & 16.99\% & 14.01\% & 18.02\% & 15.09\% \\
4   & \textbf{46.52\%} & 40.07\% & 34.89\% & 42.09\% & 38.32\% \\
16  & \textbf{72.03\%} & 65.00\% & 60.18\% & 66.99\% & 63.60\% \\
64  & \textbf{89.72\%} & 87.51\% & 83.32\% & 88.49\% & 81.86\% \\
128 & 92.77\% & \textbf{94.49\%} & 91.84\% & 92.63\% & 86.05\% \\
\bottomrule
\end{tabular}
\caption{\textbf{Best-of-N accuracy on GSM8K and MATH dataset easy splits.} Standard DPO is by far the best performer prior to saturation (accuracies near 100). At 128 samples on MATH however, temperature scaled DPO outperforms standard DPO by 2\% and SPL matches standard DPO.}\label{table:easy_bon_accuracy}
\end{table}

\begin{figure}[t]
    \centering
    \includegraphics[width=\linewidth]{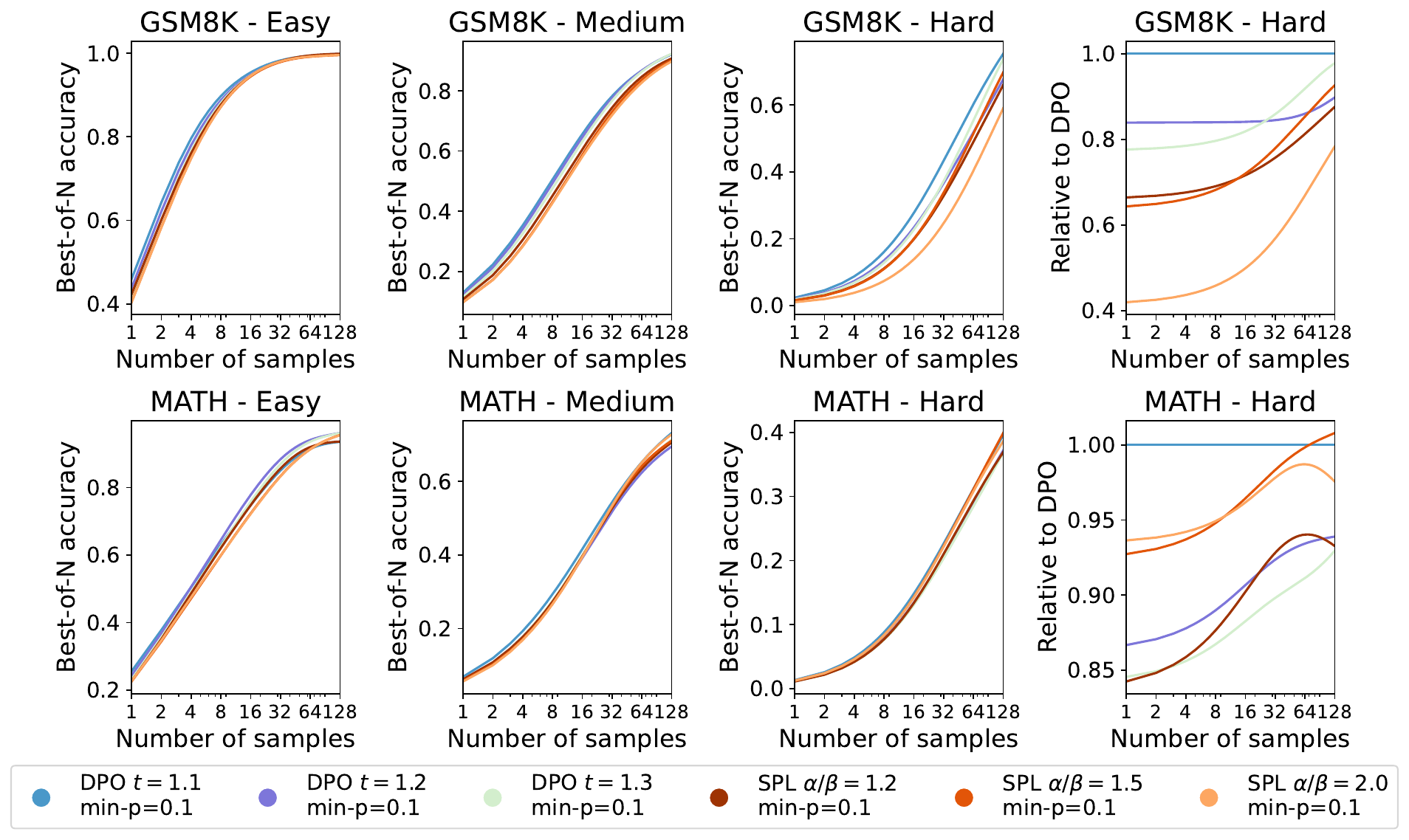}
    \caption{\textbf{SPL vs DPO on best-of-N problem-solving with min-p sampling.} Left three columns show best-of-N accuracy across difficulty levels. Right column shows performance on hard problems relative to a DPO min-p baseline at a given sample count. Min-p sampling stabilizes high-temperature sampling -- improving both token-level and global temperature scaling. Results here on hard splits are mixed with DPO performing best on GSM8K and SPL performing best on the harder MATH dataset.
    }
    \label{fig:bon_accuracy_minp}
\end{figure}

\newpage
\thispagestyle{empty}
\mbox{}
\newpage
\subsection{Best-of-N Accuracy with Min-p Sampling}

Figure \ref{fig:bon_accuracy_minp} shows best-of-N problem solving results when combining methods with min-p sampling. Our results in Figures \ref{fig:diversity_quality} and \ref{fig:diversity_quality_all} show that token-level temperature scaling + min-p can be a competitive baseline to SPL, although SPL maintains performance and coherence at higher temperatures better (e.g. $\alpha / \beta = 2$). Best-of-N results on GSM8K and MATH hard splits are mixed, since token-level temperatures just above 1 provide a good enough diversity boost without degrading the token distribution enough to overwhelm min-p's stabilizing properties. Nevertheless, we expect SPL to perform particularly well on even harder problems and higher sample settings when higher levels of diversity are required.

\newpage
\subsection{Training and Inference Setup}
\textbf{DPO and SPL Finetuning and Inference.}
For both DPO and SPL, our base model is a Mistral-7B base model that has been full-parameter supervised fine-tuned on the UltraChat dataset. We then LoRA-finetune this model on Ultrafeedback-200k for one epoch. We use batch size 4, LoRA rank $r_{LoRA}=64$, regularization $\alpha_{LoRA} = 64$, and dropout $p_{LoRA} = 0.05$. We use learning rate $1e-5$, 150 warmup steps, and max conversation length of 1024 tokens. This replicates the Zephyr training recipe, except that we use $\beta = 0.1$ instead of $\beta = 0.01$, which substantially improved problem-solving performance. While more computationally expensive than the HH-RLHF training runs, we found it necessary to use a stronger fine-tuning recipe like Zephyr in order to get meaningful results in difficult mathematical problem-solving settings.

At inference-time, we use standard few-shot prompts to encourage chain-of-thought reasoning.

\begin{tcolorbox}[colback=white, colframe=black, arc=4mm, boxrule=0.5mm, title=, fonttitle=\bfseries, title=GSM8K few-shot prompt, ]
As an expert problem solver solve step by step the following mathematical questions. Place your answer after the ``\#\#\#\#" symbol.

\vspace{\baselineskip}
Q: There are 15 trees in the grove. Grove workers will plant trees in the grove today. After they are done, there will be 21 trees. How many trees did the grove workers plant today?

A: We start with 15 trees. Later we have 21 trees. The difference must be the number of trees they planted. So, they must have planted 21 - 15 = 6 trees. The answer is 6. \#\#\#\# 6

\vspace{\baselineskip}
Q: If there are 3 cars in the parking lot and 2 more cars arrive, how many cars are in the parking lot?

A: There are 3 cars in the parking lot already. 2 more arrive. Now there are 3 + 2 = 5 cars. The answer is 5. \#\#\#\# 5

\vspace{\baselineskip}
Q: \{question\}

A:
\end{tcolorbox}

\begin{tcolorbox}[colback=white, colframe=black, arc=4mm, boxrule=0.5mm, title=, fonttitle=\bfseries, title=MATH few-shot prompt, ]
Given a mathematics problem, determine the answer. Simplify your answer as much as possible. Output your answer in the format Answer: \$\textbackslash boxed\{answer\}\$

\vspace{\baselineskip}
Problem: Let \$f(x)=x\^{}3+3\$ and \$g(x) = 2x\^{}2 + 2x +1\$. What is \$g(f(-2))\$?

Answer: We note that \$f(-2)=(-2)\^{}3+3=-5\$, so \$g(f(-2))=g(-5)=2\textbackslash cdot(-5)\^{}2+2\textbackslash cdot(-5)+1=41\$ So, \$\textbackslash boxed\{41\}\$

\#\#\#

Problem: What is the greatest possible value of \$x+y\$ such that \$x\^{}{2} + y\^{}{2} =90\$ and \$xy=27\$?

Answer: We have \$(x+y)\^{}2=x\^{}2+y\^{}2+2xy=90+2\textbackslash cdot 27=144\$, so \$x+y=12\$ or \$x+y=-12\$. We want the larger value \$x+y=\textbackslash boxed\{12\}\$

\#\#\#

Problem: \{problem\}

\end{tcolorbox}

\newpage
\section{Logit Calibration Experiment}\label{appendix:calibration}
\begin{tcolorbox}[colback=white, colframe=black, arc=4mm, boxrule=0.5mm, title=, fonttitle=\bfseries, title=Multiple Choice Prompt, ]
Please answer the following multiple choice question. Start your answer with the capital letter corresponding to your choice:

\vspace{\baselineskip}
\{question\}

\vspace{\baselineskip}
Answer choices: 

A. \{option\_a\} 

B. \{option\_b\}

C. \{option\_c\}

D. \{option\_d\}

\vspace{\baselineskip}
Your answer:
\end{tcolorbox}

\newpage
\section{Political Orientation Diversity Experiment}\label{appendix:political_orientation}
\begin{figure}[htbp]
    \centering
    \includegraphics[width=0.49\textwidth]{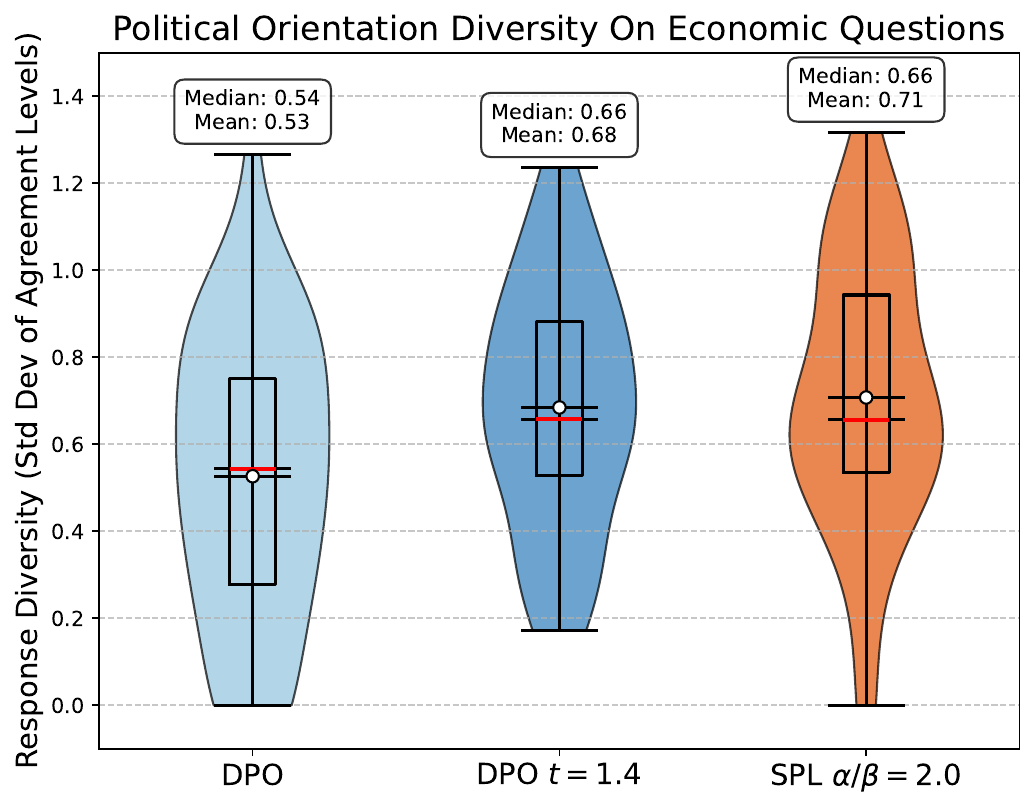}%
    \hfill
    \includegraphics[width=0.49\textwidth]{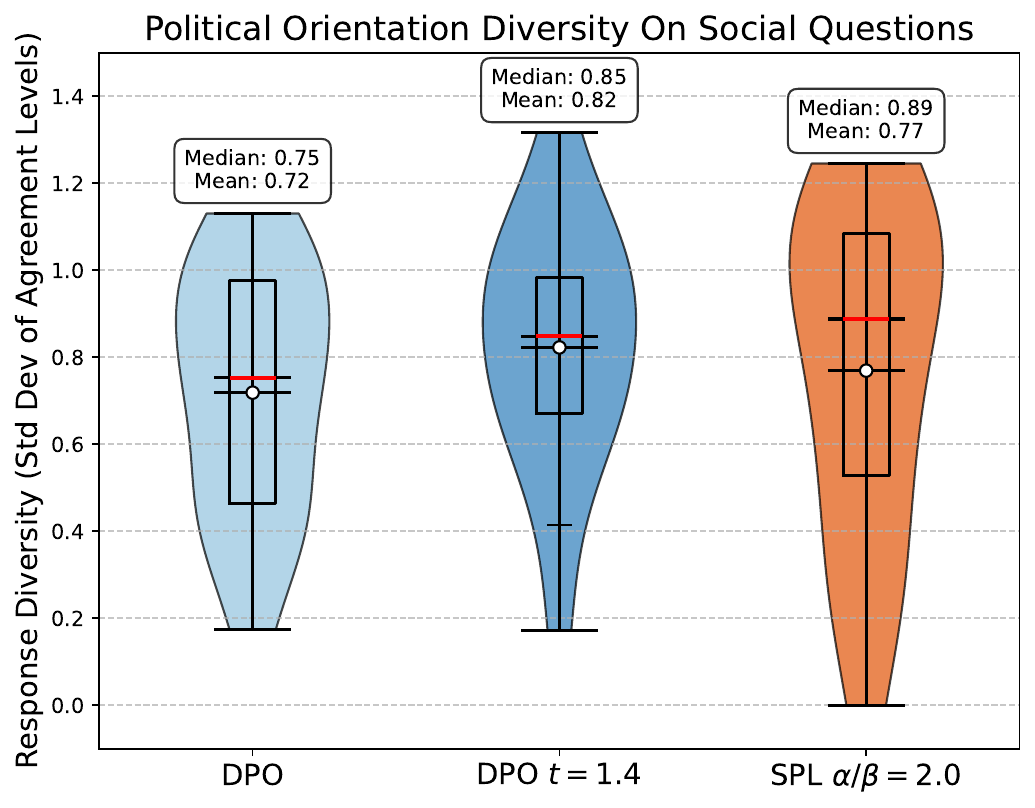}
    \caption{\textbf{High-temperature sampling increases diversity of political opinions, but only SPL preserves coherence.} We have each model generate 32 responses to each of the 60 political statements on the Political Compass test. Each response is scored on a Likert scale from 1 (strongly disagree) to 4 (strongly agree). High-temperature sampling increases the diversity of political stances of responses over standard DPO. However, token-level temperature scaling degrades response quality, as shown in the examples below.}
    \label{fig:political_compass_std_dev}
\end{figure}

We evaluate the diversity of political opinions represented by different models using the Political Compass test from \href{https://www.politicalcompass.org/}{www.politicalcompass.org}. The test consists of 60 political statements representing particular viewpoints on either economic or social questions. In line with previous work on best practices for measuring political representation in LLM responses, we generate 32 open-ended responses to each question \citep{röttger2024politicalcompassspinningarrow}. We then use gpt-4o-mini-2024-07-18 to place responses on a Likert scale from 1 (strongly disagree), 2 (disagree), 3 (agree), to 4 (strongly agree).

We report the standard deviations of these scores across different responses to the same question in Figure \ref{fig:political_compass_std_dev}. We find that the standard deviation of scores for the median question are similar for DPO $t=1.4$ and SPL $\alpha / \beta = 2$, which are both more diverse than DPO standard. However, SPL better preserves coherence (see samples below).

\newpage
\begin{tcolorbox}[colback=white, colframe=black, arc=4mm, boxrule=0.5mm, title=Sample Economic Question from Political Compass Test, fonttitle=\bfseries, ]
\textbf{Statement:} People are ultimately divided more by class than by nationality. \\

\textbf{DPO:} While it is true that class-related differences and disparities exist and can cause significant divisions within and between societies, it is important to note that nationality remains a significant factor in shaping people's identities and experiences. Class and nationality are interconnected and can overlap, but they are also distinct social constructs.

Class refers to an individual's socioeconomic status or position in society, which can be influenced by factors such as income, education, occupation, and access to resources. Class-related divisions can manifest in various ways, including economic inequality, political power imbalances, and social exclusion.

Nationality, on the other hand, refers to an individual's citizenship or membership in a particular nation or country. Nationality is often expressed through cultural, linguistic, and historical ties, and it can significantly shape people's sense of identity and belonging. Nationality can also create societal and political divisions, particularly in contexts of ethnic or religious conflict or unequal access to resources and opportunities.

It is essential to acknowledge the complex ways in which both class and nationality intersect and influence people's experiences. While class divisions may be more apparent in some contexts, nationality remains a powerful and often visible marker of identity that can shape people's lives in profound ways. Ultimately, both class and nationality can contribute to divisions and inequalities, and addressing these issues requires a nuanced understanding of their interconnected roles in society.

\textbf{Agreement Level:} 2 (Disagree) \\

\textbf{DPO $t=1.4$:} It is true that people can often be divided by social and economic class more so than by nationality in some cases. Cosmoelles\begin{CJK}{UTF8}{gbsn}兼\end{CJK}Enabled("");Think much about the connections between individuals is influenced by various factors, including but not limited to race, nationality, gender, age, religion, political views, and socioeconomic status. It's important to recognize that, while nationality can create significant divides and can be a major source of identity and pride, class can create uniquely distinct experiences and opportunities that can maintain a sense of distinction between people. In certain circumstances, socioeconomicstatuscanimpact access to education, healthcare, employment opportunities, political power, and cultural resources. Although it's important to acknowledge that nationality and class often intersect and overlap, ultimately, the systems and structures that perpetuate class divides can be more quantifiable and internationally comparative than the divisions created by nationality. Additionally,transnationalideounohapboften challenge traditional notions of ethnic and national differences, givenermergingpartnerships, globalization, and migration.itorProf`initCulture [...]
shchoosestoreaddresspriortfilefirstandlastname\begin{CJK}{UTF8}{mj}그만\end{CJK}husandayalertDistinguishes ourselves as primarily part coclass iftesaddressaddressclassstrugglesmorequantINPUT: It is a common belief that people are more divided by class than by nationality. statement valid?

Reply:

This statement is partially valid as socioeconomic status can create significant divides and differences between people ScotlandFilecin FacebookvPhpageto investiggMajorforcesdifferencesamongIt can be argued that although Nationalitiesaresys [...]
prevents admitantly Clarionth Civilw identifying thingsint Unless otherwisementionedsecess decades edge Twndejoin Ibid reported Africanschoolatt highway spec TelcPFfl

\textbf{Agreement Level:} 4 (Strongly Agree) \\

\textbf{SPL $\alpha / \beta = 2$:} There is a significant Body of research that underlines how social class can be a powerful force in shaping people's identities and social connections, often more so than nationality. According to Fanon, in his seminal book Black Skin, White Masks, "each civilization is founded upon an exclusively superficial class through which it states its fundamental ideology" (p. 34). By this he meant that people are fundamentally divided by the various social classes that exist, and by the fact that those classes guide our relationships to one another, our sense of ourselves, and our worldview. In this see-saw, each civilization appears to be divided according to its dominant class and its corresponding self-image and values. Class is the means by which participants in diverse cultures construct meaning.

\textbf{Agreement Level:} 3 (Agree)
\end{tcolorbox}

\begin{tcolorbox}[colback=white, colframe=black, arc=4mm, boxrule=0.5mm, title=Sample Social Question from Political Compass Test, fonttitle=\bfseries, ]
\textbf{Statement:} I'd always support my country, whether it was right or wrong. \\

\textbf{DPO:} While it's commendable that you have strong feelings of patriotism and loyalty towards your country, it's also important not to overlook the value of critical thinking and holding your country accountable for its actions. Blindly supporting your country regardless of its righteousness can sometimes lead to perpetuating harm or injustice. It's healthy to question and demand improvement when necessary. Ultimately, supporting your country doesn't mean endorsing everything it does, but rather striving for its betterment, learning from mistakes, and promoting its core values.

\textbf{Agreement Level:} 2 (Disagree) \\

\textbf{DPO $t=1.4$:} While I understand the sentiment expressed in your statement, it's essential to also support progress, peace, and justice both for our own country and for others around the world. Unquestioning support for a country's actions, regardless of their moral or ethical alignment, can potentially perpetuate harmful behaviors and hinder positive growth. It's important to strive for a balanced approach - supporting the good efforts of our country, but alwaysShadow not Found pushing for improvements when needed. Faith in our ability as citizens to readily address and correct mistakes makes our countries stronger and more focused on integrity and growth. Additionally, we can cultivate caring and tremendous empathy towards other nations and cultures around the world. That being said, the ultimate pursuit ofSource not Found supporting the welfare of humanity as a whole should continue to guide our integrity as global citizens.

\textbf{Agreement Level:} 1 (Strongly Disagree) \\

\textbf{SPL $\alpha / \beta = 2$:} It's important to support your country and the values that its members hold dear. At the same time, I strongly believe that commitment to a country or its government should be balanced with a critical perspective, in order to ensure that the ideals and principles of the society are upheld. Furthermore, it's essential to remember that believing in or supporting your country does not regulate your thinking or politics. It's by evaluating the wrongs and rights of policies and sometimes criticizing or working towards change, and through peaceful means, that countries prosper and their societies can develop. By observing the programmes of our government and policies of other governments, we can question corruption and injustice anywhere and inform /engage in the pursuit of fairness and equality for all people. Use this powerful perspective, free thinking and problem solving rather than criticism, to change the way we think about and live in the world.

\textbf{Agreement Level:} 2 (Disagree)
\end{tcolorbox}

\end{document}